\newtheorem{Theorem}{Theorem}
\newtheorem{Lemma}[Theorem]{Lemma}
\theoremstyle{definition}
\newtheorem{Definition}[Theorem]{Definition}
\newtheorem{Assumption}[Theorem]{Assumption}
\newtheorem*{Examples}{Examples}
\newtheorem{Example}[Theorem]{Example}
\icmltitlerunning{Generalization of the Change of Variables Formula}
\newcommand{\R}{\mathbb{R}}
\renewcommand{\L}{\mathcal{L}}
\newcommand{\minus}{\text{-}}
\begin{document}

\twocolumn[
\icmltitle{Generalization of the Change of Variables Formula \\
           with Applications to Residual Flows}

% It is OKAY to include author information, even for blind
% submissions: the style file will automatically remove it for you
% unless you've provided the [accepted] option to the icml2020
% package.

% List of affiliations: The first argument should be a (short)
% identifier you will use later to specify author affiliations
% Academic affiliations should list Department, University, City, Region, Country
% Industry affiliations should list Company, City, Region, Country

% You can specify symbols, otherwise they are numbered in order.
% Ideally, you should not use this facility. Affiliations will be numbered
% in order of appearance and this is the preferred way.
\icmlsetsymbol{equal}{*}

\begin{icmlauthorlist}
\icmlauthor{Niklas Koenen}{uniBremen,bips}
\icmlauthor{Marvin N. Wright}{uniBremen,bips}
\icmlauthor{Peter Maaß}{uniBremen}
\icmlauthor{Jens Behrmann}{uniBremen}
\end{icmlauthorlist}

\icmlaffiliation{uniBremen}{Faculty of Mathematics and Computer Science, University of Bremen, Bremen, Germany}
\icmlaffiliation{bips}{Leibniz Institute for Prevention Research and Epidemiology – BIPS, Bremen, Germany}

\icmlcorrespondingauthor{Niklas Koenen}{koenen@leibniz-bips.de}

% You may provide any keywords that you
% find helpful for describing your paper; these are used to populate
% the "keywords" metadata in the PDF but will not be shown in the document
\icmlkeywords{Machine Learning, ICML, Change of Variables, Generalization of Flows, Residual Flows, Measure Theory}

\vskip 0.3in
]

% this must go after the closing bracket ] following \twocolumn[ ...

% This command actually creates the footnote in the first column
% listing the affiliations and the copyright notice.
% The command takes one argument, which is text to display at the start of the footnote.
% The \icmlEqualContribution command is standard text for equal contribution.
% Remove it (just {}) if you do not need this facility.

\printAffiliationsAndNotice{}  % leave blank if no need to mention equal contribution
%\printAffiliationsAndNotice{\icmlEqualContribution} % otherwise use the standard text.

\begin{abstract}
%%%%%%%%%%%%%%%%%%%%%%%%%%%%%%%%%%%%%%%%%%%%%%%%%%%%%%%%%%%%%%%%%%%%%%%%%%%%%%%%%%%%%
%
%                                   Abstract
%
%%%%%%%%%%%%%%%%%%%%%%%%%%%%%%%%%%%%%%%%%%%%%%%%%%%%%%%%%%%%%%%%%%%%%%%%%%%%%%%%%%%%%
Normalizing flows leverage the Change of Variables Formula (CVF) to define flexible density models. Yet, the requirement of smooth transformations (diffeomorphisms) in the CVF poses a significant challenge in the construction of these models. To enlarge the design space of flows, we introduce $\L$-diffeomorphisms as generalized transformations which may violate these requirements on zero Lebesgue-measure sets. This relaxation allows e.g. the use of non-smooth activation functions such as ReLU. Finally, we apply the obtained results to planar, radial, and contractive residual flows.

\end{abstract}

\section{Introduction}
%%%%%%%%%%%%%%%%%%%%%%%%%%%%%%%%%%%%%%%%%%%%%%%%%%%%%%%%%%%%%%%%%%%%%%%%%%%%%%%%%%%%%
%
%                                   Introduction
%
%%%%%%%%%%%%%%%%%%%%%%%%%%%%%%%%%%%%%%%%%%%%%%%%%%%%%%%%%%%%%%%%%%%%%%%%%%%%%%%%%%%%%

The term normalizing flow refers to a concatenation of arbitrarily many simple transformations such that together they describe a transformation of desired flexibility and expressiveness. Formally, a transformation $f:Z \to X$ denotes a \emph{diffeomorphism}, i.e., a bijective mapping where both $f$ and $f^{\minus1}$ are continuously differentiable. The crucial reason why transformations are considered in normalizing flows is the validity of the \emph{Change of Variables Formula (CVF)} for a probability density $p_Z$ on $Z$ described by
\begin{align}\label{Original_CVF}
	\int_{f^{\minus1}(A)} p_Z(z)\ d\lambda^n(z) = \int_{A} p_f(x)\ d\lambda^n(x),
\end{align}
where $p_f(x):= p_Z(f^{\minus1}(x)) |\det J_{f^{\minus1}}(x)|$ and $A\subseteq X$. This formula provides an explicit expression of the density $p_f$ induced by $f$ on the target space $X$, which includes the determinant of the Jacobian as a volume correction term.

Based on this definition of a transformation, however, it is generally not accurate to use non-smooth activations, such as ReLU, Leaky ReLU, or ELU with $\alpha \neq 1$, in the design of normalizing flows. These usually cause the flow to become non-differentiable on a set with no volume w.r.t. the Lebesgue measure $\lambda^n$, hence no diffeomorphism. In measure theory, these sets are called \emph{$\lambda^n$-null sets} or only \emph{null sets} for short and are negligible in integration.

We demonstrate that the requirements for a flow can be significantly weakened by excluding null sets from the base and target space while preserving the validity of the CVF. There are remarks on using almost everywhere (a.e.) differentiable activation functions in \citet{NF_Kobyzev} or \citet{pmlr-v108-kong20a}, yet both works lack a proof of the validity of such transformations to define flows. In our work, we provide such proofs for an even more general statement. At the same time, we discuss the probabilistic background of normalizing flows to induce a well-defined density in the end. Furthermore, we point out that not every generalization of the CVF found in the mathematical literature is immediately suitable for flows. Finally, we put a special emphasis on the applications to residual flows. In doing so, we prove that non-smooth activations are also valid for both planar and radial flows \cite{rezendeVIwNF}, as well as for contractive residual flows \cite{iResNet}.

\section{Background on CVF in Probability Theory}
%%%%%%%%%%%%%%%%%%%%%%%%%%%%%%%%%%%%%%%%%%%%%%%%%%%%%%%%%%%%%%%%%%%%%%%%%%%%%%%%%%%%%
%
%                   Background on Measure and Probability Theory
%
%%%%%%%%%%%%%%%%%%%%%%%%%%%%%%%%%%%%%%%%%%%%%%%%%%%%%%%%%%%%%%%%%%%%%%%%%%%%%%%%%%%%%

The basic idea behind normalizing flows is to transform a known and tractable probability space into a more complex one. Mathematically, a probability space $(Z, \mathcal{A}_Z, \mathbb{P}_Z)$ is composed of a set $Z$ equipped with a $\sigma$-algebra and a probability measure $\mathbb{P}_Z$. In the target space $(X, \mathcal{A}_X, \mathbb{P}_\text{data})$, only the set and $\sigma$-algebra are fixed, and the data distribution $\mathbb{P}_\text{data}$ is unknown. For simplicity, we only consider open subsets of $\R^n$ and trace $\sigma$-algebras of the Lebesgue algebra $\L$ in the following, i.e., $\mathcal{A}_Z = \L(Z)$ and $\mathcal{A}_X = \L(X)$. The \emph{trace $\sigma$-Algebra} is a restricted $\sigma$-Algebra on a subset defined by $\L(Z):= \{A \cap Z \mid A \in \L\}$. Besides, we assume that the distribution $\mathbb{P}_Z$ is absolutely continuous w.r.t. the $n$-dimensional Lebesgue measure $\lambda^n$, i.e., $\lambda^n$-null sets have a $\mathbb{P}_Z$-probability of zero. Therefore, the existence of a probability density $p_Z$ follows by Radon-Nikodym's theorem (\citealp[Sec. 3.2]{bogachev2006measure}). For more information on measure theory, see \citet{bogachev2006measure} or \citet{elstrodt2013}.

At the lowest level, a transformation $f:Z \to X$ has to be at least an $\mathcal{A}_Z$-$\mathcal{A}_X$-measurable mapping (i.e., a random variable) in order to induce a distribution on the target space by the so-called \emph{pushforward measure}:
\begin{align*}
\mathbb{P}_f(A) := \mathbb{P}_Z\left(f^{\minus1}(A)\right) && (A \in \mathcal{A}_X). %\label{Background:PushforwardMeasure}
\end{align*}
Under the assumption that the base distribution $\mathbb{P}_Z$ has a density function $p_Z$, there also exists an integral representation for the pushforward measure, i.e.,
\begin{align}
\mathbb{P}_f(A) = \int_{f^{\minus1}(A)} p_Z(z)\ d\lambda^n(z). \label{Background:PushforwardMeasureWithDensity}
\end{align}

\begin{Assumption}[Transformations for the CVF]\label{Assumption:Transformation}
    The function $f:Z \to X$ between two open sets $Z,X \subseteq \R^n$ is a diffeomorphism; or equivalently expressed by the inverse function theorem, $f$ is bijective, continuously differentiable and without critical points.
\end{Assumption}
We note that $z \in Z$ is a \emph{critical point} if the Jacobian-de\-ter\-mi\-nant vanishes in this point, i.e., $\det J_f(z) = 0$. In particular, a critical point $z$ indicates that the inverse is non-differentiable or non-continuously differentiable in $f(z)$.

If the mapping $f$ satisfies Assumption \ref{Assumption:Transformation}, the CVF from eq. \eqref{Original_CVF} holds, and we can extend the expression \eqref{Background:PushforwardMeasureWithDensity} of the distribution $\mathbb{P}_f$ by
\begin{align}
\mathbb{P}_f(A) = \int_{f^{\minus1}(A)} p_Z(z)\ d\lambda^n(z)
%\notag \\
= \int_A p_f(x)\ d\lambda^n(x) \label{Background:CVF_classic}.
\end{align}
%where $p_f(x) :=p_Z(f^{\minus1}(x))\ |\det J_{f^{\minus1}}(x)|$ for $x \in X$\linebreak $\lambda^n$-a.e.. 
Since the equality \eqref{Background:CVF_classic} is valid for all $\mathcal{A}_X$-measurable sets, the $\lambda^n$-unique probability density of $\mathbb{P}_f$ is given by
\begin{align}
    p_f(x) = p_Z\left(f^{\minus1}(x)\right)\, \left|\det J_{f^{\minus1}}(x)\right|\quad (\text{a.e. } x\in X ).\label{Background:Density}
\end{align}

In order to unify the existing definitions of flows in the literature, we will speak of a \emph{flow} or a \emph{proper flow} when a density on $X$ of the form as in eq. \eqref{Background:Density} is induced.

\section{Generalization of the CVF}
%%%%%%%%%%%%%%%%%%%%%%%%%%%%%%%%%%%%%%%%%%%%%%%%%%%%%%%%%%%%%%%%%%%%%%%%%%%%%%%%%%%%%
%
%                   Generalization of the Change of Variables Formula
%
%%%%%%%%%%%%%%%%%%%%%%%%%%%%%%%%%%%%%%%%%%%%%%%%%%%%%%%%%%%%%%%%%%%%%%%%%%%%%%%%%%%%%

If it is argued that a flow indeed induces a probability density, often the CVF is merely named, and only in rare cases reference is made to sources like \citet{Rudin1987} or \citet{bogachev2006measure}. In most of the mathematical literature, it is proved in the following form
\begin{align}
\int_{f(Z)} \psi(x) dx = \int_Z \psi(f(z)) |\det J_f(z)| dz, \label{CVF_in_Literature}
\end{align}
where $f:U \to \R^n$ is injective, continuously differentiable with $U\subseteq \R^n$ open, $Z \subset U$ measurable, and $\psi$ Lebesgue integrable. Moreover, there are even broader formulations of this statement where $f$ is only differentiable or even only Lipschitz continuous everywhere and injective almost everywhere (cf. \citealp{varberg1971change}). In \citet[Thm. 5.8.30]{bogachev2006measure} and \citet{Hajlasz1993}, generalizations are discussed where injectivity is not even required by considering the cardinality of the preimage set. 

The identity \eqref{CVF_in_Literature} is, however, rather analytically motivated for solving integrals and does not aim to provide a representation of the density induced by $f$. In the common case where $f$ satisfies Assumption \ref{Assumption:Transformation}, these two variants (eq. \eqref{Original_CVF} and \eqref{CVF_in_Literature}) are valid, since both $f$ and $f^{\minus1}$ form diffeomorphisms. Nevertheless, when we consider generalizations, it is usually no longer clear how and whether they can also be applied for the purposes of normalizing flows. In the following sections, we derive a similar strong generalization of the CVF as in \eqref{CVF_in_Literature}, which is more suited to transforming probability densities, i.e., more suited for normalizing flows.

\subsection{$\mathcal{L}$-Diffeomorphism}

The basic idea is to require the conditions for a diffeomorphism only almost everywhere, since these sets do not affect integration. This idea leads to the following definition of a generalized transformation, providing a weaker set of conditions than in Assumption \ref{Assumption:Transformation}.
\begin{Definition}[Lebesgue-Diffeomorphism]\label{L_Diffeo}
	A mapping ${f:Z \to X}$ between two open sets $Z,X \subseteq \R^n$ is called \emph{Lebesgue-diffeomorphism} (\emph{$\L$-diffeomorphism} for short), if there are $\lambda^n$-null sets $N_Z, N_X$ with $N_Z$ closed such that the restriction $f:Z \setminus N_Z \to X \setminus N_X$ is bijective, continuously differentiable, and the set of critical points is a null set.
\end{Definition}
\begin{Examples}
    In the following, we list a few $\L$-dif\-fe\-o\-mor\-phism and their corresponding null sets:
    \begin{enumerate}[topsep=0pt,itemsep=-1ex,partopsep=1ex,parsep=1ex]
        \item A diffeomorphism $f:Z \to X$ forms an $\L$-dif\-fe\-o\-mor\-phism where both $N_Z$ and $N_X$ are empty sets.
        \item The cubic function $f:\R \to \R$ with $f(x)=x^3$ is an $\L$-dif\-fe\-o\-mor\-phism with $N_Z=N_X=\emptyset$ and one critical point $\{0\}$, which is a null set.
        \item In particular, the plane polar coordinates transformation $f:\R_{+} \times [0,2\pi] \to \R^2$ given by $f(r,\phi) = (r \cos(\phi), r\sin(\phi))$ is an $\L$-dif\-fe\-o\-mor\-phism with
        \begin{align*}
            N_Z&=\{0\} \times [0,2\pi] \cup \R_{>0} \times \{0,2\pi\} \quad \text{and}\\
            N_X&= \R_{+} \times \{0\}.
        \end{align*}
    \end{enumerate}
\end{Examples}
In appendix Lemma \ref{Appendix:Lemma_measurable} we show that a\linebreak $\L$-diffeomorphism is a measurable mapping with respect to the corresponding trace $\sigma$-algebras of the Lebesgue algebra, thus inducing a distribution on $X$ by the pushforward measure. Moreover, the name 'diffeomorphism' does justice to the Definition \ref{L_Diffeo}, which we state in the following lemma (see Appendix \ref{Appendix_A} for the proof):\\
\begin{Lemma}\label{Lemma:Name_Diffeo}
	Let $f:Z \to X$ be an $\L$-diffeomorphism. Then there are $\lambda^n$-null sets $N_Z, N_X$ with $N_Z$ closed such that the restriction $f: Z \setminus N_Z \to X \setminus N_X$ is a diffeomorphism.
\end{Lemma}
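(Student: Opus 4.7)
My plan is to enlarge the excluded sets from Definition \ref{L_Diffeo} by the critical set of $f$ and its image, so that on the complements the inverse function theorem applies. Concretely, I would fix null sets $N_Z$ (closed in $Z$) and $N_X$ provided by the $\L$-diffeomorphism property, let $C\subseteq Z\setminus N_Z$ denote the critical set of the restriction, and put
\[
  \tilde N_Z := N_Z\cup C, \qquad \tilde N_X := N_X\cup f(C).
\]

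The first bookkeeping step is to check that $\tilde N_Z$ remains a closed null set. Closedness reduces to observing that $Z\setminus N_Z$ is open in $Z$, that $\det J_f$ is continuous on this open set, and that $C=(\det J_f)^{-1}(\{0\})$ is therefore relatively closed in $Z\setminus N_Z$; hence $Z\setminus \tilde N_Z = (Z\setminus N_Z)\setminus C$ is open in $Z$. That $\lambda^n(\tilde N_Z)=0$ is immediate from the hypotheses on $N_Z$ and $C$.

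The step I expect to be the main obstacle is showing that $\tilde N_X$ is also a null set, since this requires a genuine analytic input rather than pure bookkeeping: $f$ is $C^1$ and hence locally Lipschitz on the open set $Z\setminus N_Z$, and Lipschitz maps between open subsets of $\R^n$ preserve $\lambda^n$-null sets in equal dimension. Covering $Z\setminus N_Z$ by countably many compact sets on which $f$ is Lipschitz and using $\sigma$-sub\-additivity of $\lambda^n$ then gives $\lambda^n(f(C))=0$, so $\tilde N_X$ is null.

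It remains to verify that $f\colon Z\setminus \tilde N_Z \to X\setminus \tilde N_X$ is a diffeomorphism. Continuous differentiability and injectivity carry over from the restriction in Definition \ref{L_Diffeo}, and by construction $\det J_f\neq 0$ on the entire open domain $Z\setminus \tilde N_Z$. The inverse function theorem then supplies a $C^1$ local inverse at every point, and injectivity of $f$ on $Z\setminus N_Z$ glues these local inverses into a single globally defined $C^1$ map $f^{-1}$. Surjectivity onto the claimed target reduces to the identity
\[
  f(Z\setminus \tilde N_Z) \;=\; f(Z\setminus N_Z)\setminus f(C) \;=\; (X\setminus N_X)\setminus f(C) \;=\; X\setminus \tilde N_X,
\]
where the first equality again uses injectivity of $f$ on $Z\setminus N_Z$.
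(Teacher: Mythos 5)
Your proposal is correct and follows the same overall architecture as the paper's proof: enlarge the excluded sets by the critical set $C$ and its image $f(C)$, observe that $C=(\det J_f)^{\minus1}(\{0\})$ is relatively closed by continuity of the Jacobian determinant so that $Z\setminus(N_Z\cup C)$ is open, and then invoke the inverse function theorem on the complement. The one genuine divergence is the analytic input used to show $\lambda^n(f(C))=0$. The paper appeals to Sard's theorem \citep[§2]{Milnor1965}, which bounds the measure of the image of the critical set directly and would do so even if $C$ itself had positive measure. You instead use that a $C^1$ map on an open subset of $\R^n$ is locally Lipschitz and that Lipschitz maps in equal dimension send $\lambda^n$-null sets to $\lambda^n$-null sets \citep[Lem. 7.25]{Rudin1987}, combined with a countable exhaustion and $\sigma$-subadditivity. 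This is more elementary and exploits the hypothesis, already built into Definition~\ref{L_Diffeo}, that $C$ is a null set; it is also the same tool the paper itself uses later for Theorem~\ref{ContResFlow}, so the substitution is entirely in the spirit of the paper. The only point worth making explicit in your final write-up is that the image $f(Z\setminus\tilde N_Z)$ is open (immediate, since $f$ is an open map wherever $\det J_f\neq 0$), so that continuous differentiability of the glued inverse is a meaningful statement; the paper absorbs this into the citation of \citep[Thm. 9.24 and 9.25]{Rudin}.
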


The reasoning behind the assumptions of an $\L$-diffeomorphism can be understood as follows: We can remove negligible sets from the domain and the target space such that the restriction is bijective and continuously differentiable. To show that the restricted inverse is continuously differentiable, we use the inverse function theorem (\citealp[Thm. 9.24]{Rudin}). Nevertheless, for the inverse function theorem to apply, all critical points and their image must still be removable, i.e., measure-theoretically negligible. That the set of critical points is a null set was assumed, and for the image, we use Sard's theorem (see Appendix \ref{Appendix:Proof_Name_Diffeo} for the detailed proof).

Furthermore, it is necessary to suppose that the set of critical points is a null set; because there are examples of continuously differentiable and bijective functions whose set of critical points does not have measure zero. These points cause the inverse function to be non-continuously differentiable on a set with a positive measure (see Appendix \ref{Appendix:Example_1} for an example).

\subsection{CVF for $\L$-Diffeomorphism}
The previously defined $\L$-diffeomorphisms form a reasonable generalization of diffeomorphisms and are comparable to those transformations discussed in the introduction of this section. Furthermore, the following theorem justifies the validity of CVF as well for $\L$-diffeomorphisms (see Appendix \ref{Appendix_A} for the proof):
\begin{Theorem}[CVF for $\L$-Diffeomorphism]\label{Generalization_of_CFV}
	Let $f:Z \to X$ be an $\mathcal{L}$-diffeomorphism and $\mathbb{P}_Z$ a distribution on $Z$ with probability density $p_Z$ w.r.t. $\lambda^n$. Then the  
	CVF (see eq. \eqref{Background:CVF_classic}) holds for $f$. In particular $f$ induces a distribution on $X$ with density $p_f$ given by
	\begin{align}
		p_f(x) = p_Z(f^{\minus1}(x))\, |\det J_{f^{\minus1}} (x)| &&(\text{a.e. } x \in X).
	\end{align}
\end{Theorem}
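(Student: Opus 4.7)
The plan is to reduce Theorem~\ref{Generalization_of_CFV} to the classical change-of-variables formula of eq.~\eqref{Original_CVF} by cutting out the exceptional null sets that Lemma~\ref{Lemma:Name_Diffeo} provides. That lemma yields a closed $\lambda^n$-null set $N_Z$ and a $\lambda^n$-null set $N_X$ such that the restriction $\tilde f := f|_{Z \setminus N_Z}$ is a genuine diffeomorphism between the open sets $\tilde Z := Z\setminus N_Z$ and $\tilde X := X \setminus N_X$. I would also invoke the measurability lemma from the appendix to know that $f$ is measurable w.r.t.\ the trace Lebesgue algebras, so that the pushforward $\mathbb{P}_f$ is well defined on all of $\L(X)$.

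The main step is then a short chain of equalities: for an arbitrary $A \in \L(X)$,
\begin{align*}
\mathbb{P}_f(A)
  &= \int_{f^{\minus1}(A)} p_Z(z)\, d\lambda^n(z) \\
  &= \int_{f^{\minus1}(A)\cap \tilde Z} p_Z(z)\, d\lambda^n(z) \\
  &= \int_{\tilde f^{\minus1}(A\cap \tilde X)} p_Z(z)\, d\lambda^n(z) \\
  &= \int_{A\cap \tilde X} p_Z\!\bigl(\tilde f^{\minus1}(x)\bigr)\, \bigl|\det J_{\tilde f^{\minus1}}(x)\bigr|\, d\lambda^n(x) \\
  &= \int_A p_f(x)\, d\lambda^n(x).
\end{align*}
The first line unfolds the pushforward via the density $p_Z$; the second discards $N_Z$, using $\lambda^n(N_Z)=0$; the third exploits that $\tilde f$ has codomain $\tilde X$, so $f^{\minus1}(A)\cap \tilde Z = \tilde f^{\minus1}(A\cap \tilde X)$; the fourth is the classical CVF applied to the diffeomorphism $\tilde f$ between open sets; and the last extends $p_f$ by zero on $N_X$, which is permitted since $\lambda^n(N_X)=0$. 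Because the identity holds for every $A \in \L(X)$, the a.e.-uniqueness of Lebesgue densities (Radon--Nikodym) pins down $p_f$ in the claimed form.

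The proof is not deep once Lemma~\ref{Lemma:Name_Diffeo} is available: the real technical work has been pushed into that lemma (and, upstream, into the inverse function theorem together with Sard's theorem). The subtleties I would still want to verify explicitly are (i)~that $\tilde X$ is open, so that the classical CVF truly applies to the pair of open sets $(\tilde Z,\tilde X)$, which follows since $\tilde f$ is a diffeomorphism and hence an open map; (ii)~the set-theoretic identity $f^{\minus1}(A)\cap \tilde Z = \tilde f^{\minus1}(A\cap \tilde X)$, which is immediate from the codomain of $\tilde f$ being $\tilde X$; and (iii)~that the resulting density $p_f$ is independent of the particular choice of $N_Z, N_X$ up to a $\lambda^n$-null set, so that Radon--Nikodym characterizes it unambiguously a.e. These bookkeeping steps around the exceptional sets are the only real friction; the integral computation itself is a one-line reduction to the classical statement.
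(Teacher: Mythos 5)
Your proposal is correct and follows essentially the same route as the paper's proof: reduce to Lemma~\ref{Lemma:Name_Diffeo}, discard the null set $N_Z$ from the domain of integration, use the set identity $f^{\minus1}(A)\setminus N_Z = f^{\minus1}(A\setminus N_X)$, apply the classical CVF to the restricted diffeomorphism, extend $p_f$ by zero on $N_X$, and invoke Radon--Nikodym for a.e.\ uniqueness. The only cosmetic difference is that the paper phrases the first reduction via additivity and absolute continuity of $\mathbb{P}_Z$ rather than directly via $\lambda^n(N_Z)=0$ under the integral, which is the same step.
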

This theorem legitimizes the use of functions as proper flows that are not everywhere bijective, continuous, differentiable, or continuously differentiable. Even more, the inverse does not have to fulfill these properties everywhere either. In short, we can apply $\L$-diffeomorphisms as flows.

\subsection{Invariance under Composition}
The strength and tremendous upswing of normalizing flows mainly occurred because simple flows can be chained together. Thus, we can achieve the desired degree of complexity and expressiveness by increasing the number of simple flows. For this purpose, flows are often considered on the same base and target space. This crucial property is also retained for $\L$-diffeomorphisms (see Appendix \ref{Appendix_A} for the proof):
\begin{Lemma}[Composition]\label{Invariance_under_Composition}
	Let $\Omega \subseteq \R^n$ be an open set and $f_1,f_2:\Omega \to \Omega$ $\mathcal{L}$-diffeomorphisms. Then the composition $f_2 \circ f_1$ is also an $\L$-diffeomorphism on $\Omega$.
\end{Lemma}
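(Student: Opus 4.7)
The plan is to reduce the statement to the genuine diffeomorphism case via Lemma~\ref{Lemma:Name_Diffeo}, construct exceptional null sets for the composition from those of $f_1$ and $f_2$, and then verify the conditions of Definition~\ref{L_Diffeo} directly. Concretely, I would first apply Lemma~\ref{Lemma:Name_Diffeo} to each $f_i$ to obtain closed null sets $N_i^Z$ and null sets $N_i^X$ such that $f_i:\Omega\setminus N_i^Z\to\Omega\setminus N_i^X$ is an honest diffeomorphism. Working at the level of diffeomorphisms rather than the weaker raw definition is important because it gives two features I will lean on: each $f_i$ is a homeomorphism with a continuous inverse, and $\det J_{f_i}$ vanishes nowhere on its restricted domain.

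With that in hand, I would take the natural candidates
\begin{align*}
    N_Z &:= N_1^Z \cup f_1^{-1}\bigl(N_2^Z\cap(\Omega\setminus N_1^X)\bigr),\\
    N_X &:= N_2^X \cup f_2\bigl(N_1^X\cap(\Omega\setminus N_2^Z)\bigr).
\end{align*}
Both are $\lambda^n$-null because $C^1$-diffeomorphisms map null sets to null sets (a direct consequence of the classical identity \eqref{CVF_in_Literature} applied to the indicator of the set). Verifying that $g:=f_2\circ f_1$ restricted to $\Omega\setminus N_Z\to\Omega\setminus N_X$ is bijective and $C^1$ is then routine set-chasing using the explicit forms of $N_Z$ and $N_X$; the chain rule gives $\det J_g(z)=\det J_{f_2}(f_1(z))\cdot\det J_{f_1}(z)$, which is nonzero for every $z\in\Omega\setminus N_Z$, so $g$ carries \emph{no} critical points on this domain.

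The step I expect to be the main obstacle is showing that $N_Z$ is not merely a null set but also \emph{closed} in $\Omega$, as required by Definition~\ref{L_Diffeo}. My plan for this is a short topological argument: since $N_2^Z$ is closed in $\Omega$, the intersection $N_2^Z\cap(\Omega\setminus N_1^X)$ is closed in the subspace $\Omega\setminus N_1^X$, so by continuity of $f_1$ on $\Omega\setminus N_1^Z$ its preimage $S$ is closed in the open set $\Omega\setminus N_1^Z$. Writing $S=T\cap(\Omega\setminus N_1^Z)$ for some $T$ closed in $\Omega$ and expanding $S\cup N_1^Z=T\cup N_1^Z$ then yields closedness of $N_Z$ in $\Omega$. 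Once this is in place, all requirements of Definition~\ref{L_Diffeo} are met and the lemma follows.
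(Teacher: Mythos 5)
Your proposal is correct and follows essentially the same route as the paper: reduce to honest diffeomorphisms via Lemma~\ref{Lemma:Name_Diffeo}, take exactly the exceptional sets $N_Z = N_Z^1 \cup f_1^{-1}\bigl(N_Z^2\cap(\Omega\setminus N_X^1)\bigr)$ and $N_X = N_X^2 \cup f_2\bigl(N_X^1\cap(\Omega\setminus N_Z^2)\bigr)$, argue closedness of $N_Z$ through the same subspace-topology extension, and conclude nullity from the fact that diffeomorphisms map null sets to null sets. No substantive differences to report.
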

From this lemma, it results inductively that the concatenation $f = f_K \circ \ldots \circ f_1$
of $K$ $\L$-dif\-fe\-o\-mor\-phisms $f_1,\ldots f_K$ on $\Omega$ forms an $\L$-diffeomorphism. Hence, common formulas from the normalizing flow literature, e.g., presented in \citet{NF_Kobyzev} or \citet{NF_Papa}, also apply to $\L$-dif\-fe\-o\-mor\-phisms or can be extended to them. For example, the following holds for the Jacobian-determinant of $f$ with $x_i:= f_{i+1}^{\minus1} \circ \ldots \circ f_K^{\minus1}(x)$ and $x_K := x$
\begin{align*}
	\det J_{f^{\minus1}}(x) = \prod_{i=1}^{K} \det J_{f_i^{\minus1}}(x_i) && (\text{a.e. } x \in \Omega).
\end{align*}
Despite the legitimate use of $\L$-diffeomorphisms mathematically, it is essential to note that these can lead to numerical instabilities. In some points, the Jacobian-determinant or the inverse function does not need to exist. Nevertheless, these values can be set meaningfully or ignored in some situations.
%%%%%%%%%%%%%%%%%%%%%%%%%%%%%%%%%%%%%%%%%%%%%%%%%%%%%%%%%%%%%%%%%%%%%%%%%%%%%%%%%%%%%
%
%                         Applications to Residual Flows
%
%%%%%%%%%%%%%%%%%%%%%%%%%%%%%%%%%%%%%%%%%%%%%%%%%%%%%%%%%%%%%%%%%%%%%%%%%%%%%%%%%%%%%

\section{Non-smooth Activations in Residual Flows}
In this section, we apply the previous results to residual mappings, which are perturbations of the identity of the form $f(x) = x + g(x)$. This justifies the use of non-smooth activations in planar, radial, and contractive residual flows.

\subsection{Planar Flows}
The term \emph{planar flow} was first introduced by \citet{rezendeVIwNF}, which refers to functions of the form
\begin{align*}
f_\text{P}(x) = x + u h(w^T x + b)
\end{align*}
with non-linearity $h$ and $w,u \in \R^n, b \in \R$. They describe a plane-wise expansion or contraction of all hyperplanes orthogonal to $w$. In order to admit also non-smooth activations, we generalize the results from \citet{rezendeVIwNF} in the following theorem and obtain a sufficient criterion for the bijectivity of a planar flow $f_\text{P}$ (see Appendix \ref{Appendix:PlanarFlows} for the proof).

\begin{Theorem}\label{PlanarFlows:Bijectivity}
	Let $f_\text{P}$ be a planar flow with activation $h$. If the one-dimensional mapping $\psi:\R \to \R$ with
	\begin{align*}
	\psi(\lambda) = \lambda + w^T uh(\lambda)
	\end{align*}
	is bijective, then $f_\text{P}$ is also bijective.
\end{Theorem}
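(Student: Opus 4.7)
The plan is to reduce the $n$-dimensional inversion problem for $f_\text{P}$ to the one-dimensional inversion problem for $\psi$. The key observation is that $f_\text{P}(x) - x = u\, h(w^T x + b)$ is always a scalar multiple of the fixed vector $u$, and the scalar depends only on the scalar quantity $w^T x + b$. Thus $f_\text{P}$ is effectively one-dimensional, acting nontrivially only along the direction of $u$, with the magnitude governed by the affine projection $x \mapsto w^T x + b$.

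Concretely, I would fix any $y \in \R^n$ and look for $x$ satisfying $f_\text{P}(x) = y$, i.e., $y = x + u\, h(w^T x + b)$. Applying $w^T$ to both sides and adding $b$ yields
\[
w^T y + b = (w^T x + b) + (w^T u)\, h(w^T x + b) = \psi(w^T x + b).
\]
Since $\psi$ is bijective by assumption, this forces $w^T x + b = \psi^{\minus1}(w^T y + b) =: \lambda_y$, which is uniquely determined by $y$. Substituting back into the original equation produces the candidate $x = y - u\, h(\lambda_y)$, also uniquely determined. This step simultaneously yields injectivity of $f_\text{P}$ and identifies the only possible preimage.

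To complete the proof I would verify that this candidate actually satisfies $f_\text{P}(x) = y$, which gives surjectivity. A direct computation shows
\[
w^T x + b = (w^T y + b) - (w^T u)\, h(\lambda_y) = \psi(\lambda_y) - (w^T u)\, h(\lambda_y) = \lambda_y,
\]
so $h(w^T x + b) = h(\lambda_y)$ and therefore $f_\text{P}(x) = y - u\, h(\lambda_y) + u\, h(\lambda_y) = y$, as required.

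I do not anticipate a serious obstacle: the argument is purely algebraic once the correct projection onto $w$ is isolated. The only mild subtlety is distinguishing between the necessary condition obtained by projecting onto $w$ (which produces the candidate $x$) and the sufficient condition obtained by substitution (which confirms it is a preimage); both are needed, but both are immediate here. Degenerate cases such as $w = 0$ or $w^T u = 0$ present no difficulty, since in either situation $\psi$ reduces to the identity, which is trivially bijective, and the same derivation carries through without modification.
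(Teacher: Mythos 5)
Your proof is correct and follows essentially the same route as the paper: project the equation $f_\text{P}(x)=y$ onto $w$ to reduce to the one-dimensional map $\psi$, use its bijectivity to pin down $\lambda_y$, and substitute back to exhibit the unique preimage $x = y - u\,h(\lambda_y)$. The only cosmetic difference is that the paper phrases the reduction via an explicit orthogonal decomposition $x = x_\parallel + x_\perp$, whereas you apply $w^T$ directly and then verify the candidate by substitution, which is a slightly leaner presentation of the same argument.
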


However, bijectivity is not sufficient for the planar flow to induce a density of the desired form. But similar to Theorem \ref{PlanarFlows:Bijectivity}, conditions can be imposed on a one-dimensional mapping such that $f_\text{P}$ describes an $\L$-diffeomorphism (see Appendix \ref{Appendix:PlanarFlows} for the proof):

\begin{Theorem}\label{PlanarFlows:MainTheorem}
	Let $f_\text{P}$ be a bijective planar flow. If there is a countable and closed set $N \subset \R$ such that the activation function $h$ is continuously differentiable on $\R \setminus N$ and
	\begin{align*}
	C= \left\{x \in \R \setminus N \mid 1 + w^Tu h'(x) = 0 \right\}
	\end{align*}
	is countable, then $f_\text{P}$ is an $\L$-diffeomorphism. In particular, $f_\text{P}$ is a proper flow.
\end{Theorem}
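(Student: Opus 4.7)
The plan is to exhibit explicit null sets $N_Z$ and $N_X$ realizing Definition \ref{L_Diffeo} for $f_\text{P}$. Assuming $w \neq 0$ (if $w = 0$ then $f_\text{P}$ is just a constant translation and trivially a diffeomorphism with $N_Z = N_X = \emptyset$), I would set
\begin{align*}
N_Z = \{x \in \R^n \mid w^T x + b \in N\}, && N_X = f_\text{P}(N_Z).
\end{align*}
Since $N$ is closed and $x \mapsto w^T x + b$ is continuous, $N_Z$ is closed, and since $N$ is countable, $N_Z$ is a countable union of affine hyperplanes $H_\lambda = \{x \mid w^T x + b = \lambda\}$, hence $\lambda^n$-negligible.

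The key structural observation for controlling $N_X$ is that on each $H_\lambda$ with $\lambda \in N$, the map $f_\text{P}$ reduces to the translation $x \mapsto x + u h(\lambda)$, so $f_\text{P}(H_\lambda) = H_\lambda + u h(\lambda)$ is again an affine hyperplane. Thus $N_X$ is a countable union of hyperplanes, hence a $\lambda^n$-null set. Bijectivity of the restriction $f_\text{P}: \R^n \setminus N_Z \to \R^n \setminus N_X$ then follows from the assumed global bijectivity of $f_\text{P}$ combined with $N_X = f_\text{P}(N_Z)$ (injectivity prevents any point of $\R^n \setminus N_Z$ from landing inside $N_X$).

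On $\R^n \setminus N_Z$, the map $f_\text{P}$ is continuously differentiable by the chain rule and the hypothesis $h \in C^1(\R \setminus N)$. Applying the matrix determinant lemma to $J_{f_\text{P}}(x) = I + u\, h'(w^T x + b)\, w^T$ yields
\begin{align*}
\det J_{f_\text{P}}(x) = 1 + w^T u\, h'(w^T x + b),
\end{align*}
so the critical points in $\R^n \setminus N_Z$ form $\{x \mid w^T x + b \in C\}$, another countable union of hyperplanes and hence a null set. All conditions of Definition \ref{L_Diffeo} are thereby verified, and Theorem \ref{Generalization_of_CFV} delivers the ``proper flow'' conclusion.

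The only real obstacle I anticipate is showing that $N_X$ remains negligible: a general $C^1$ bijection need not send a particular null set to a null set, but the planar-flow structure rescues us because $f_\text{P}$ restricted to each level hyperplane of $w^T x + b$ is a pure translation. If one preferred a softer argument, a Sard- or Lipschitz-type estimate could replace this direct computation, but the translation reduction is sharper and keeps the proof elementary.
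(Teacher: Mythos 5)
Your proposal is correct and follows essentially the same route as the paper's own proof: both remove the closed null set $\tau^{-1}(N)$ of hyperplanes (with $\tau(x)=w^Tx+b$), note that $f_\text{P}$ maps each level hyperplane to a translated hyperplane so the image set stays negligible, and identify the critical points as $\tau^{-1}(C)$ via the matrix determinant lemma. The only cosmetic difference is that you make the "translation on each hyperplane" observation explicit where the paper merely asserts that images of hyperplanes are hyperplanes.
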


Using Theorem \ref{PlanarFlows:MainTheorem}, conditions for different activations such that the resulting planar flow describes a flow can be found via reducing it to a one-dimensional problem. The constraints for the most popular non-linearities are summarized in Table \ref{PlanarFlows:Table}.

\begin{table}[t]
	\caption{Conditions on the parameters of a planar flow $f_\text{P}$, such that it is a proper flow for the activation $h$ (see \ref{Examples:PlanarFlow_Activations} for the proofs).}
	\label{PlanarFlows:Table}
	\vskip 0.15in
	\begin{center}
		\begin{small}
			\begin{sc}
				\begin{tabular}{ll}
					\toprule
					Non-Linearity & Condition \\
					\midrule
					ReLU     & $w^Tu > -1$ \\
					ELU ($\alpha > 0$)  & $w^Tu > \max(-1,-\tfrac{1}{\alpha})$\\
					Tanh & $w^Tu \geq -1$\\
					Softplus & $w^Tu > -1$\\
					\bottomrule
				\end{tabular}
			\end{sc}
		\end{small}
	\end{center}
	\vskip -0.1in
\end{table}

\subsection{Radial Flows}
Another intuitive way to perturb the identity in $\R^n$ is to expand or contract spherically around a centering point. This type of transformation was initially studied by \citet{TabakTurner} and subsequently by \citet{rezendeVIwNF}. These transformations of the form
\begin{align*}
	f_\text{R}(x) = x + \beta h\big(\|x-x_0\|_2\big) (x-x_0)
\end{align*}
are called \emph{radial flows}, where $h:\R_+ \to \R_+$ is a localization function, $x_0 \in \R^n$ the center, and $\beta \in \R$. When $\beta$ is negative, a contraction occurs, and positive values lead to an expansion around the center $x_0$. The following theorem provides a sufficient criterion for the bijectivity of a radial flow if we consider non-smooth functions $h$ (see Appendix \ref{Appendix:RadialFlows} for the proof):

\begin{Theorem}\label{RadialFlows:Bijectivity}
	Let $f_\text{R}$ be a radial flow with localization $h$. If the one-dimensional mapping $\psi:\R_+ \to \R_+$ with
	\begin{align*}
		\psi(r) :=r + \beta h(r)r
	\end{align*}
	is bijective, then $f_\text{R}$ is also bijective.
\end{Theorem}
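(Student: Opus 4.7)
The plan is to exploit the spherical symmetry of $f_\text{R}$ around $x_0$: the map only rescales the vector $x-x_0$ and leaves its direction untouched. Concretely, writing $y := x - x_0$ and $r := \|y\|_2$, one has
\begin{align*}
f_\text{R}(x) - x_0 = \bigl(1 + \beta h(r)\bigr)\, y,
\end{align*}
so that $\|f_\text{R}(x) - x_0\|_2 = \psi(r)$ whenever $1 + \beta h(r) \ge 0$. I would first extract this non-negativity from the assumption: since $\psi: \R_+ \to \R_+$ is bijective with $\psi(0)=0$, any $r>0$ with $1+\beta h(r) = 0$ would give $\psi(r)=0=\psi(0)$, contradicting injectivity; hence $1+\beta h(r) > 0$ for all $r>0$, and the scalar $1+\beta h(r)$ never flips the direction of $y$.

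Next I would prove injectivity. Suppose $f_\text{R}(x_1) = f_\text{R}(x_2)$; set $y_i := x_i - x_0$ and $r_i := \|y_i\|_2$. Taking norms on both sides of $(1+\beta h(r_1))y_1 = (1+\beta h(r_2))y_2$ yields $\psi(r_1) = \psi(r_2)$, so $r_1 = r_2 =: r$ by injectivity of $\psi$. If $r=0$ both points equal $x_0$; otherwise $1+\beta h(r)>0$ and we cancel to get $y_1=y_2$, hence $x_1 = x_2$.

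For surjectivity, fix any target $\tilde y \in \R^n$ and let $s := \|\tilde y - x_0\|_2$. If $s=0$, the preimage is $x_0$. If $s>0$, surjectivity of $\psi$ yields $t>0$ with $\psi(t)=s$, and I define
\begin{align*}
x := x_0 + t\,\frac{\tilde y - x_0}{\|\tilde y - x_0\|_2}.
\end{align*}
A direct computation shows $\|x-x_0\|_2 = t$ and $f_\text{R}(x) - x_0 = (1+\beta h(t))\cdot t \cdot \frac{\tilde y - x_0}{s} = \psi(t)\cdot \frac{\tilde y - x_0}{s} = \tilde y - x_0$, so $f_\text{R}(x) = \tilde y$.

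The only subtle point is ruling out direction reversal, i.e. ensuring $1+\beta h(r)>0$; without the codomain restriction $\psi:\R_+\to\R_+$, the scalar factor could in principle vanish or change sign and destroy the direction-preservation on which the entire reduction rests. Once that is in place, everything else is a one-line reduction to the bijectivity of the scalar map $\psi$.
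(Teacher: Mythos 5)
Your proof is correct and follows essentially the same route as the paper's: decompose $x - x_0$ into its norm $r$ and direction, observe $\|f_\text{R}(x)-x_0\|_2=\psi(r)$, and let the bijectivity of $\psi$ pin down the radius while the positivity of $1+\beta h(r)$ recovers the direction. Your explicit justification that $1+\beta h(r)>0$ for $r>0$ (combining the codomain restriction $\psi:\R_+\to\R_+$ with injectivity at $\psi(0)=0$) is a point the paper leaves more implicit, but the argument is the same.
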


Again, bijectivity is sufficient only for the existence of the inverse which means that $f_\text{R}$, in general, does neither describe an $\L$-diffeomorphism nor a flow. However, this property is ensured by the conditions of the following theorem (see Appendix \ref{Appendix:RadialFlows} for the proof):

\begin{Theorem}\label{RadialFlows:MainTheorem}
	Let $f_\text{R}$ be a bijective radial flow. If there is a countable, closed set $N \subset \R_{>0}$ such that the localization function $h$ is continuously differentiable on $\R_{>0} \setminus N$ and 
	\begin{align*}
		C:= \left\{ r \in \R_{>0} \setminus N \left| 1+ \beta \left( h(r) + r h'(r)=0 \right) \right. \right\}
	\end{align*}
	is countable, then $f_\text{R}$ is an $\L$-diffeomorphism. In particular, $f_\text{R}$ is a proper flow.
\end{Theorem}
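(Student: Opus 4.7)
The plan is to verify each of the four conditions of Definition~\ref{L_Diffeo} by pulling the singular set of $h$ back through the radius map $r(x):=\|x-x_0\|_2$. I would set
\begin{equation*}
N_Z := \{x_0\} \cup \bigl\{x \in \R^n : r(x) \in N\bigr\}, \qquad N_X := f_\text{R}(N_Z).
\end{equation*}
Since $N$ is closed in $\R_{>0}$ and $r$ is continuous, $\{x : r(x) \in N\}$ is closed; adjoining the single point $x_0$ keeps it closed. For $n\geq 2$ each sphere $\{r = \rho\}$ is a $\lambda^n$-null set, and a countable union of such spheres is still null (the one-dimensional case reduces directly to countability of $N$). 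Thus $N_Z$ is a closed null set, and on $Z' := \R^n \setminus N_Z$ the radius $r$ is $C^\infty$ and $h$ is $C^1$ at $r(x)$ by hypothesis, so $f_\text{R}$ is $C^1$ on $Z'$.

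A direct calculation gives $f_\text{R}(x) = x_0 + (1+\beta h(r(x)))(x-x_0)$, so the sphere of radius $\rho$ is mapped into the sphere of radius $|1+\beta h(\rho)|\rho = \psi(\rho)$; the absolute value can be dropped because bijectivity of $\psi:\R_+ \to \R_+$ forces $1+\beta h(\rho) > 0$ for every $\rho > 0$. Hence $N_X = \{x_0\} \cup \bigcup_{\rho\in N}\{x : r(x) = \psi(\rho)\}$ is again a countable union of spheres (plus a point), which is measurable and $\lambda^n$-null. Combined with bijectivity of $f_\text{R}$ on all of $\R^n$, this yields bijectivity of the restriction $f_\text{R}:Z' \to \R^n \setminus N_X$.

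The remaining step, which I expect to be the most technically involved, is the Jacobian computation. Writing $v := x - x_0$ and differentiating, one obtains the rank-one perturbation
\begin{equation*}
J_{f_\text{R}}(x) = (1+\beta h(r))\,I + \tfrac{\beta h'(r)}{r}\,vv^T.
\end{equation*}
The matrix determinant lemma then gives
\begin{equation*}
\det J_{f_\text{R}}(x) = (1+\beta h(r))^{n-1}\bigl(1 + \beta h(r) + \beta r h'(r)\bigr).
\end{equation*}
The first factor is nonzero on $Z'$ by the sign argument above, so $\det J_{f_\text{R}}(x) = 0$ if and only if $r(x) \in C$. The critical set is therefore $\{x \in Z' : r(x) \in C\}$, a countable union of spheres, hence a $\lambda^n$-null set. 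All four requirements of Definition~\ref{L_Diffeo} are then met, so $f_\text{R}$ is an $\L$-diffeomorphism, and Theorem~\ref{Generalization_of_CFV} immediately upgrades this to the conclusion that $f_\text{R}$ is a proper flow.
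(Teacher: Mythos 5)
Your proof follows essentially the same route as the paper's: remove the spheres $\{x : r(x)\in N\}$ together with the center $x_0$ as a closed $\lambda^n$-null set, observe that $f_\text{R}$ maps spheres around $x_0$ to spheres around $x_0$ so the image set is again a countable union of null sets, compute $\det J_{f_\text{R}}$ via the rank-one update and the matrix determinant lemma, and identify the critical set as the countable union of spheres $\{x : r(x)\in C\}$. The one point to tighten is your appeal to bijectivity of $\psi$, which is not among this theorem's hypotheses (only bijectivity of $f_\text{R}$ is assumed): derive instead that $1+\beta h(\rho)\neq 0$ for every $\rho>0$ from injectivity of $f_\text{R}$ --- otherwise the entire sphere of radius $\rho$ would collapse onto $f_\text{R}(x_0)=x_0$ --- which is all the determinant factorization and the sphere-image argument actually require, and is essentially how the paper argues.
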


\subsection{Contractive Residual Flows}
Contractive mappings provide a more general type of perturbations of the identity. A function $g$ is called \emph{contractive} if there exists a constant $L<1$ such that for all $x,y \in \R^n$ it holds for any norm on the vector space $\R^n$
\begin{align*}
	\|g(x) - g(y)\| \leq L\ \|x-y\|.
\end{align*}
In \citet{iResNet} and \citet{ResidualFlows}, these kinds of residual flows are called \emph{(contractive) residual flows} and are denoted henceforth by $f_\text{C}$. On the one hand, this strong condition on $g$ gives the bijectivity of $f_\text{C}$ by the Banach's fixed point theorem; on the other hand, it follows that $f_\text{C}$ has no critical points \citep[Lem. 5.4]{elib_1729}. Thus, only a few assumptions are required to guarantee that a residual flow forms an $\L$-diffeomorphism.

\begin{Theorem}\label{ContResFlow}
	Let $f_\text{C}$ be a residual flow with contractive perturbation $g$. If there is a closed $\lambda^n$-null set $N \subset \R^n$ such that $g$ is continuously differentiable on $\R^n \setminus N$, then $f_\text{C}$ is an $\L$-diffeomorphism. In particular, $f_\text{C}$ is a proper flow.
\end{Theorem}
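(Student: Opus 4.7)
The plan is to verify each clause of Definition \ref{L_Diffeo} using the natural choice $N_Z := N$ and $N_X := f_\text{C}(N)$, and then invoke Theorem \ref{Generalization_of_CFV} to conclude that $f_\text{C}$ is a proper flow. The domain and codomain $Z = X = \R^n$ are open, and $N_Z = N$ is by hypothesis a closed $\lambda^n$-null set, so these structural clauses come for free.

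First I would establish global bijectivity of $f_\text{C}$ on $\R^n$, as already indicated in the discussion preceding the statement: for any target $y \in \R^n$ the map $T_y(x) := y - g(x)$ is an $L$-contraction with $L < 1$, and Banach's fixed point theorem supplies a unique $x$ with $x = y - g(x)$, i.e.\ $f_\text{C}(x) = y$. Once $N_X$ is defined as the image $f_\text{C}(N)$, the restriction $f_\text{C}: \R^n \setminus N \to \R^n \setminus f_\text{C}(N)$ is automatically bijective. Continuous differentiability of this restriction is immediate from the hypothesis on $g$, and the critical-point condition of Definition \ref{L_Diffeo} is vacuous here, since \citet[Lem. 5.4]{elib_1729} (already cited above) shows that $f_\text{C}$ has no critical points at all.

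The genuinely nontrivial step, and the one I expect to be the main obstacle, is showing that $N_X = f_\text{C}(N)$ is itself a $\lambda^n$-null set. Since $g$ is $L$-Lipschitz, the map $f_\text{C} = \mathrm{id} + g$ is $(1+L)$-Lipschitz, and I would invoke the standard measure-theoretic fact that a Lipschitz map from $\R^n$ to $\R^n$ sends $\lambda^n$-null sets to $\lambda^n$-null sets. The proof is a short covering argument: for any $\varepsilon > 0$ one covers $N$ by countably many balls of total volume at most $\varepsilon$; each ball of radius $r$ has image contained in a ball of radius $(1+L)r$, and hence of volume at most $(1+L)^n$ times the original; summing and letting $\varepsilon \to 0$ forces $f_\text{C}(N)$ to have outer measure zero.

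Assembling the three pieces shows that $f_\text{C}$ is an $\L$-diffeomorphism in the sense of Definition \ref{L_Diffeo}, and Theorem \ref{Generalization_of_CFV} then immediately delivers the proper-flow conclusion.
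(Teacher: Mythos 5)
Your proposal is correct and follows essentially the same route as the paper, which disposes of the theorem in one sentence by combining global bijectivity from Banach's fixed point theorem, the absence of critical points from the cited lemma, and the fact that the $(1+L)$-Lipschitz map $f_\text{C}$ sends the null set $N$ to a null set $f_\text{C}(N)$. You merely spell out the covering argument behind the Lipschitz null-set fact, which the paper instead cites from \citet[Lem.~7.25]{Rudin1987}.
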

The proof of this statement follows directly from the property that Lipschitz continuous functions map $\lambda^n$-null sets to sets of $\lambda^n$-measure zero \citep[Lem. 7.25]{Rudin1987} and the fact that $\text{Lip}(f_\text{C}) = 1 + L$.

\section{Conclusion}
%%%%%%%%%%%%%%%%%%%%%%%%%%%%%%%%%%%%%%%%%%%%%%%%%%%%%%%%%%%%%%%%%%%%%%%%%%%%%%%
%
%                                   Conclusion
%
%%%%%%%%%%%%%%%%%%%%%%%%%%%%%%%%%%%%%%%%%%%%%%%%%%%%%%%%%%%%%%%%%%%%%%%%%%%%%%%
In this paper, we have shown that the conditions on a flow need not be strictly satisfied everywhere, and a certain degree of freedom is permitted on measure-theoretically negligible sets. With this, we have justified using $\L$-dif\-fe\-o\-mor\-phism instead of a normal diffeomorphism as flows. This gain significantly increases the possibilities in the design of normalizing flows and, in particular, allows the usage of non-smooth activations. Nevertheless, we have only justified their existence and usage mathematically, far from leading to a successful practical application. Thus, future work should investigate whether and in which situations non-smooth activations provide an actual gain. Moreover, we only applied these generalizations to simple residual flows. For this reason, it remains open to what extent other flows such as more general planar flows, like Sylvester flows \cite{SylvesterFlows}, or autoregressive flows \cite{pmlr-v80-huang18d,pmlr-v97-jaini19a} also benefit from this.

\section*{Acknowledgements}
Niklas Koenen and Marvin N. Wright gratefully acknowledge the funding from the German Research Foundation (DFG) in the context of the Emmy Noether Grant 437611051.

\bibliography{references.bib}
\bibliographystyle{icml2021}

%%%%%%%%%%%%%%%%%%%%%%%%%%%%%%%%%%%%%%%%%%%%%%%%%%%%%%%%%%%%%%%%%%%%%%%%%%%%%%%
%
%                                   Appendix
%
%%%%%%%%%%%%%%%%%%%%%%%%%%%%%%%%%%%%%%%%%%%%%%%%%%%%%%%%%%%%%%%%%%%%%%%%%%%%%%%

%\cleardoublepage
\onecolumn
\appendix
\setcounter{Theorem}{0}

\section{Generalization of the Change of Variables Formula}\label{Appendix_A}
\begin{Lemma} \label{Appendix:Lemma_measurable}
	Let $f:Z \to X$ be an $\L$-diffeomorphism from the measurable space $(Z,\mathcal{A}_Z)$ into the target space $(X, \mathcal{A}_X)$. Then $f$ is an $\mathcal{A}_Z$-$\mathcal{A}_X$-measurable mapping, i.e., a random variable.
\end{Lemma}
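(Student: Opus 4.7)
The plan is to prove the lemma by a decomposition of $f^{-1}(A)$ according to whether the source lies in the closed null set $N_Z$ or its complement, handling each piece separately. First I would invoke Lemma \ref{Lemma:Name_Diffeo} to replace $N_Z, N_X$ (if necessary) by slightly larger null sets so that the restriction $\tilde f := f|_{Z \setminus N_Z}$ is a genuine diffeomorphism onto $X \setminus N_X$ between open subsets of $\R^n$. This step is crucial because from the raw definition we only know $\tilde f$ is $C^1$, but to work with Lebesgue-measurable (rather than just Borel) targets, I will need the inverse of $\tilde f$ to also be $C^1$.

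Next, given an arbitrary $A \in \mathcal{A}_X = \L(X)$, I would write
\begin{align*}
  f^{-1}(A) \;=\; \bigl[f^{-1}(A) \cap N_Z\bigr] \;\cup\; \bigl[f^{-1}(A) \cap (Z \setminus N_Z)\bigr].
\end{align*}
The first piece is a subset of the $\lambda^n$-null set $N_Z$, hence Lebesgue measurable by completeness of the Lebesgue $\sigma$-algebra, regardless of how $f$ behaves on $N_Z$. The second piece equals $\tilde f^{-1}(A)$, so the task reduces to showing that the preimage of any Lebesgue measurable set under the diffeomorphism $\tilde f$ between open sets is Lebesgue measurable.

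To handle $\tilde f^{-1}(A)$, I would use the standard decomposition $A = B \cup M$ with $B$ Borel and $M$ a Lebesgue null set. Since $\tilde f$ is continuous on the open set $Z \setminus N_Z$, the preimage $\tilde f^{-1}(B)$ is Borel, hence Lebesgue measurable. For the null part, observe that $\tilde f$ maps $Z \setminus N_Z$ into $X \setminus N_X$, so $\tilde f^{-1}(M \cap N_X) = \emptyset$ and only $M' := M \cap (X \setminus N_X)$ can contribute; then $\tilde f^{-1}(M')$ coincides with the set-theoretic image of $M'$ under the inverse function $\tilde f^{-1}\colon X \setminus N_X \to Z \setminus N_Z$, which is $C^1$ and therefore locally Lipschitz, so that by Luzin's (N) property it sends null sets to null sets. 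Hence $\tilde f^{-1}(M')$ is null and Lebesgue measurable.

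Combining the three parts shows $f^{-1}(A) \in \L(Z)$, which is precisely $\mathcal{A}_Z$-$\mathcal{A}_X$-measurability. The main obstacle I anticipate is the null-set piece $\tilde f^{-1}(M')$: continuity alone never guarantees preservation of Lebesgue measurability (only of Borel measurability), so the argument genuinely depends on the regularity upgrade from Lemma \ref{Lemma:Name_Diffeo}, which turns the restriction into a $C^1$ diffeomorphism and lets me invoke the Luzin (N) property on both sides.
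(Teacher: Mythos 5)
Your proposal follows the same basic decomposition as the paper's proof: split $f^{-1}(A)$ into the part inside the closed null set $N_Z$ (measurable by completeness of the Lebesgue $\sigma$-algebra) and the part in $Z \setminus N_Z$, which is rewritten as the preimage of $A \cap (X\setminus N_X)$ under the restriction. The difference lies in how the second piece is handled, and here your version is actually the more careful one. The paper asserts that the restricted map, being continuous, is ``in particular $\L(Z\setminus N_Z)$-$\L(X\setminus N_X)$-measurable''; as stated this is not a valid inference, since continuity only guarantees Borel measurability, and preimages of Lebesgue-measurable sets under continuous maps need not be Lebesgue measurable. Your extra step --- writing $A = B \cup M$ with $B$ Borel and $M$ null, handling $B$ by continuity, and handling $M$ by observing that $\tilde f^{-1}(M')$ is the image of a null set under the $C^1$ (hence locally Lipschitz) inverse, so that Luzin's property (N) makes it null --- is exactly what is needed to close that gap, and it is precisely why you are right that the upgrade to a genuine diffeomorphism via Lemma \ref{Lemma:Name_Diffeo} is not optional. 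So: same skeleton, but your proof supplies a justification the paper's own argument is missing.
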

\begin{proof}
	According to the definition of measurable functions, we have to show that every preimage of a measurable set contained in $\mathcal{A}_X$ is also an element of $\mathcal{A}_Z$. Let $B \in \mathcal{A}_X$. Since $f$ is an $\L$-diffeomorphism, there exist $\lambda^n$-null sets $N_Z, N_X$ such that the restriction $f:Z \setminus N_Z \to X \setminus N_X$ is bijective and continuously differentiable. We can decompose the preimage of $B$ as
	\begin{align*}
		f^{\minus1}(B) = \left(N_Z \cap f^{\minus1}(B)\right) \cup \left(Z \setminus N_Z \cap f^{\minus1}(B)\right).
	\end{align*}
	Since $N_Z$ is a null set, the subset $N_Z \cap f^{\minus1}(B)$ also has a measure of zero. We can represent the other set of the equation above by the bijectivity of $f$ on $Z \setminus N_Z$ as
	\begin{align}\label{Appendix:Lemma_1_hard_set}
		Z \setminus N_Z \cap f^{\minus1}(B) = f^{\minus1}(X \setminus N_X \cap B).
	\end{align}
	Thus it results from the continuity of the restricted mapping, which is in particular $\L(Z \setminus N_Z)$-$\L(X\setminus N_X)$-measurable, that the set \eqref{Appendix:Lemma_1_hard_set} is contained in the trace $\sigma$-algebra ${\L(Z \setminus N_Z)}$, hence an element of $\mathcal{A}_Z$. In the end, we can represent $f^{\minus1}(B)$  as a union of a null set and a $\mathcal{A}_Z$-measurable set, which shows the claim.
\end{proof}

\begin{proof}[\textbf{Proof of Lemma \ref{Lemma:Name_Diffeo}}] \label{Appendix:Proof_Name_Diffeo}
	From the Definition \ref{L_Diffeo} of an $\L$-diffeomorphism, we obtain the existence of null sets $N_Z, N_X$ with $N_Z$ closed such that the restricted function $f:Z \setminus N_Z \to X \setminus N_X$ is bijective, continuously differentiable, and the set of critical points $C$ has measure zero. At first, we show that $C$ is a closed set. Consider the mapping $g:Z \setminus N_Z \to \R$ with $g(x) = \det(J_f(x))$. Because of the continuity of the determinant and the continuous differentiability of the restricted $f$, the mapping $g$ is also continuous. Therefore the preimage under $g$ of closed sets is also closed according to the topological definition of continuity, i.e.,
	\begin{align*}
		g^{\minus1}(\{0\}) = \{x \in Z \setminus N_Z \mid \det J_f(x)=0\} = C
	\end{align*}
	is a closed set; thus, $Z \setminus (N_Z \cup C)$ is open in $\R^n$. It follows by the inverse function theorem and its consequences \citep[Thm. 9.24 and 9.25]{Rudin} that the restriction
	\begin{align*}
		f:Z \setminus (N_Z \cup C) \to X \setminus (N_X \cup f(C))
	\end{align*}
	forms a diffeomorphism. By assumption, $N_C$ and $C$ are $\lambda^n$-null sets, and so $N_Z \cup C$. From Sard's theorem \citep[§2]{Milnor1965}, it follows that $f(C)$, hence especially $N_X \cup f(C)$, is a set of measure zero.
\end{proof}

\begin{proof}[\textbf{Proof of Theorem \ref{Generalization_of_CFV}}]
	We obtain from Lemma \ref{Lemma:Name_Diffeo} the existence of two null sets $N_Z, N_X$ so that the restriction \linebreak$f:Z \setminus N_Z \to X \setminus N_X$ of an {$\L$-diffeomorphism}  forms a diffeomorphism. Let $A \in \mathcal{A}_X$ be a measurable set in the target space $X$. Because of the additivity of $\mathbb{P}_Z$ and the definition of the pushforwad measure $\mathbb{P}_f$, it follows
	\begin{align}
	\mathbb{P}_f(B):= \mathbb{P}_Z\left(f^{\minus1}(B)\right)
	&= \mathbb{P}_Z\left((f^{\minus1}(B)\setminus N_Z)\ \dot\cup\ (f^{\minus1}(B)\cap N_Z) \right)\notag \\
	&= \mathbb{P}_Z\left(f^{\minus1}(B)\setminus N_Z\right) + \mathbb{P}_Z\left(f^{\minus1}(B)\cap N_Z \right). \label{Appendix:Theorem_CVF_sets}
	\end{align}
	Since $\mathbb{P}_Z$ is absolutely continuous w.r.t. the Lebesgue measure and $f^{\minus1}(B) \cap N_Z$ is a subset of a $\lambda^n$-null set, the right summand of term \eqref{Appendix:Theorem_CVF_sets} vanishes. For the other term, we note the following set equality:
	\begin{align}
	f^{\minus1}(B) \setminus N_Z = f^{\minus1} (B \setminus N_X).
	\end{align}
	Because the restriction $f:Z\setminus N_Z \to X \setminus N_X$ is a diffeomorphism and $B\setminus N_X$ is an element of the trace $\sigma$-algebra $\L(X \setminus N_X)$, the equation \eqref{Appendix:Theorem_CVF_sets} can be extended by the CVF from eq. \eqref{Original_CVF} as
	\begin{align}
		\mathbb{P}_f(B) &= \int_{f^{\minus1}(B \setminus N_X)} p_z(z)\ d\lambda^n(z)
		= \int_{B \setminus N_X} p_z\left(f^{\minus1}(x)\right)\ \left|\det J_{f^{\minus1}}(x)\right|\ d\lambda^n(x). \label{Appendix:Theorem_CVF_integral}
	\end{align}
	We define a function $p_f: X \to \R_+$ with
	\begin{align*}
		p_f(x) = \left\{ 
			\begin{array}{cl}
				p_Z\left(f^{\minus1}(x)\right)\ \left|\det J_{f^{\minus1}}(x)\right| & x \in X \setminus N_X \\
				\vphantom{\Big|}0 & x \in N_X
			\end{array} \right..
	\end{align*}
	Since $B \cap N_X \subseteq N_X$ is a $\lambda^n$-null set, we can additionally integrate in \eqref{Appendix:Theorem_CVF_integral} over this set and obtain the expression of the distribution induced by $f$
	\begin{align}
		\mathbb{P}_f(B)= \int_{B \setminus N_X} p_f(x)\ d\lambda^n(x)  + \int_{B \cap N_X} p_f(x)\ d\lambda^n(x)= \int_{B} p_f(x)\ d\lambda^n(x),
	\end{align} 
	hence $p_f$ forms a density of distribution $\mathbb{P}_f$. Finally, by the Radon-Nikodym theorem \citep[Sec. 3.2]{bogachev2006measure}, the density function is $\lambda^n$-unique, which shows the claim.
\end{proof}

\begin{proof}[\textbf{Proof of Lemma \ref{Invariance_under_Composition}}]
	Since $f_1$ and $f_2$ are $\L$-diffeomorphisms, by Lemma \ref{Lemma:Name_Diffeo} there are $\lambda^n$-null sets $N_Z^1, N_Z^2, N_X^1, N_X^2 \subset \Omega$ with $N_Z^1$ and $N_Z^2$ closed such that both
	\begin{align}
		f_1 : \Omega \setminus N_Z^1 \to \Omega \setminus N_X^1\quad \quad \text{ and } \quad \quad
		f_2 : \Omega \setminus N_Z^2 \to \Omega \setminus N_X^2 \label{Appendix:Lemma_Invariance_diffeos}
	\end{align}
	 are diffeomorphisms. We remove from the domain of $f_1$ all elements mapping to the set $N_Z^2$ since $f_2$ is not a diffeomorphism on it. Hence we define
	 \begin{align}
	 	N_Z := N_Z^1 \cup f_1^{\minus1} \left(N_Z^2 \cap \Omega \setminus N_X^1\right) \quad \quad \text{ and } \quad \quad 
	 	N_X := N_X^2 \cup f_2\left(N_X^1 \cap \Omega \setminus N_Z^2\right). \label{Appendix:Lemma_Invariance_1}
	 \end{align}
	 It is relatively easy to see that both $f_1(\Omega \setminus N_Z) = \Omega \setminus (N_X^1 \cup N_Z^2)$ and $f_2(f_1(\Omega \setminus N_Z)) = \Omega \setminus N_X$ are valid. Moreover, from the continuity of $f_1$ and since $N_Z^2 \cap \Omega \setminus N_X^1$ is closed in the subspace topology on $\Omega \setminus N_X^1$, it follows that $f^{\minus1}(N_Z^2 \cap \Omega \setminus N_X^1)$ is closed in $\Omega \setminus N_Z^1$. Hence there exists a closed set $A \subset \Omega$ such that (see Sec. 6 in \citealp{willard1970general} for more information)
	 \begin{align*}
	 	f^{\minus1}\left(N_Z^2 \cap \Omega \setminus N_X^1\right)= A \cap \Omega \setminus N_Z^1.
	 \end{align*} 
	 This results with eq. \eqref{Appendix:Lemma_Invariance_1} in
	 \begin{align*}
	 	N_Z = N_Z^1 \cup \left(A \cap \Omega \setminus N_Z^1\right) = N_Z^1 \cup A
	 \end{align*}
	 which is a closed set in $\Omega$. In summary, the mapping $f_2 \circ f_1 : \Omega \setminus N_Z \to \Omega \setminus N_X$ is well-defined, continuously differentiable, bijective, and has no critical points, since we have merely further restricted the diffeomorphisms from \eqref{Appendix:Lemma_Invariance_diffeos}. In addition, $N_Z$ is a closed set in $\Omega$. To complete the proof, we still need to show that $N_Z$ and $N_X$ have a $\lambda^n$-measure of zero. However, this follows directly from the fact that diffeomorphisms map null sets to null sets.
\end{proof}

The following example illustrates why we assume that the set of critical points is a null set. Indeed, one can construct bijective and continuously differentiable functions whose inverse is not differentiable on a set with positive measure.
\begin{Example}\label{Appendix:Example_1}
    As an example, consider the following recursively defined set on the interval $[0,1]$ called the \emph{Smith-Volterra-Cantor set} $C_\text{SV}$. The definition and properties of this set are only briefly sketched here. For more information and detailed proofs, the reader is referred to \citet{Bressoud} or \citet{dimartino}. Another example can also be found in \citet[Sec. 17.15]{Floret1981}.
    
    The recursive definition of $C_\text{SV}$ on $[0,1]$ starts by removing the open middle quarter from the interval. Then we create subsequent sets $S_n$ by removing an open interval of length $\tfrac{1}{4^n}$ from the center of each interval in $S_{n\minus1}$, i.e.,
    
    \begin{minipage}{0.7\linewidth}
    	\centering
    		\resizebox{\textwidth}{!}{
    			\renewcommand{\arraystretch}{2}
    			\begin{tabular}{r*{3}{>{\centering\arraybackslash}m{2cm}>{\centering\arraybackslash}m{1cm}}>{\centering\arraybackslash}m{2cm}}
    			$S_1=$ && $\Big[0, \frac{3}{8} \Big]$ && $\cup$ && $\Big[\frac{5}{8}, 1 \Big]$ & \\
    			$S_2=$ &${\small \Big[0, \frac{5}{32} \Big]}$ & $\cup$ & ${\small \Big[\frac{7}{32}, \frac{3}{8} \Big]}$ & $\cup$ & ${\small \Big[\frac{5}{8}, \frac{25}{32} \Big]}$ & $\cup$ & ${\small \Big[\frac{27}{32}, 1 \Big]}$
    			\\
    			$S_3=$ &$ {\tiny \Big[0, \tfrac{1}{16} \Big] \cup \Big[\tfrac{3}{32}, \tfrac{5}{32} \Big]}$ & $\cup$ & ${ \tiny \Big[\tfrac{7}{32}, \frac{9}{32} \Big] \cup \Big[\tfrac{5}{16}, \frac{3}{8} \Big]}$ &$\cup$ & 
    			$ {\tiny \Big[\tfrac{5}{8}, \tfrac{11}{16} \Big] \cup \Big[\tfrac{23}{32}, \tfrac{25}{32} \Big]}$ &$\cup$ & 
    			${\tiny \Big[\tfrac{27}{32}, \tfrac{29}{32} \Big] \cup \Big[\tfrac{15}{16}, 1 \Big]}$
    			\\
    			$\vdots$
    			\end{tabular}
    		}
    \end{minipage}%
    \begin{minipage}{0.3\linewidth}
    	\centering	
    	\resizebox{\textwidth}{2cm}{		
    			\begin{tikzpicture}[decoration=Cantor set,line width=3mm]
    			\draw (0,0) -- (1,0);
    			
    			\draw (0,-0.5) -- (3/8, -0.5);
    			\draw (5/8,-0.5) -- (1, -0.5);
    			
    			\draw (0,-1) -- (5/32, -1);
    			\draw (7/32,-1) -- (3/8, -1);
    			\draw (5/8,-1) -- (25/32, -1);
    			\draw (27/32,-1) -- (1, -1);
    			
    			\draw (0,-1.5) -- (2/32, -1.5);
    			\draw (3/32,-1.5) -- (5/32, -1.5);
    			\draw (7/32,-1.5) -- (9/32, -1.5);
    			\draw (10/32,-1.5) -- (12/32, -1.5);
    			\draw (20/32,-1.5) -- (22/32, -1.5);
    			\draw (23/32,-1.5) -- (25/32, -1.5);
    			\draw (27/32,-1.5) -- (29/32, -1.5);
    			\draw (30/32,-1.5) -- (1, -1.5);
    			
    			\draw (0,-2) -- (2/64 - 0.25^4, -2);
    			\draw (2/64 + 0.25^4,-2) -- (2/32, -2);
    			\draw (3/32,-2) -- (4/32 - 0.25^4, -2);
    			\draw (4/32 + 0.25^4,-2) -- (5/32, -2);
    			\draw (7/32,-2) -- (8/32 - 0.25^4, -2);
    			\draw (8/32 + 0.25^4,-2) -- (9/32, -2);
    			\draw (10/32,-2) -- (11/32 - 0.25^4, -2);
    			\draw (11/32 + 0.25^4,-2) -- (12/32, -2);
    			\draw (20/32,-2) -- (21/32 - 0.25^4, -2);
    			\draw (21/32 + 0.25^4,-2) -- (22/32, -2);
    			\draw (23/32,-2) -- (24/32 - 0.25^4, -2);
    			\draw (24/32 + 0.25^4,-2) -- (25/32, -2);
    			\draw (27/32,-2) -- (28/32 - 0.25^4, -2);
    			\draw (28/32 + 0.25^4,-2) -- (29/32, -2);
    			\draw (30/32,-2) -- (31/32 - 0.25^4, -2);
    			\draw (31/32 + 0.25^4,-2) -- (1, -2);
    			
    			\end{tikzpicture}}
    		
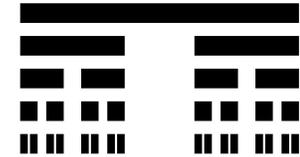
\captionof{figure}{Visualization of $S_n$.}
    \end{minipage}
    Now the Smith-Volterra-Cantor set is defined as the following intersection of all $S_n$
    \begin{align*}
    	C_\text{SV} := \bigcap_{n=1}^\infty S_n,
    \end{align*}
    which is closed because of the countable intersection of closed sets. In addition, in each recursion step in each of the $2^{n\minus1}$ intervals, the middle pieces with the Lebesgue measure of $\tfrac{1}{4^n}$ are removed. Hence it holds
    \begin{align*}
    	\lambda^1(C_\text{SV}) = 1 - \sum_{n=1}^\infty \frac{2^{n\minus1}}{4^n} = 1 - \frac{1}{4} \sum_{n=0}^\infty \left(\frac{1}{2} \right)^n =1 - \frac{1}{2} = \frac{1}{2}
    \end{align*}
    due to the geometric series. In order to construct a counterexample, we consider the function $d:[0,1] \to \R$ with $d(x)= \inf_{c \in C_\text{SV}} |x-c|$. This function is obviously continuous and $d(x) = 0$ holds if and only if $x \in C_\text{SV}$, otherwise it only takes positive values. Because of the continuity, this function is integrable and it follows from the fundamental theorem of calculus that
    \begin{align*}
    f:(0,1) &\to f\big((0,1)\big) \\
    f(x) &= \int_0^x d(z)\ dz,
    \end{align*}
    is continuously differentiable. Moreover, $f$ is injective resulting from the construction of $C_\text{SV}$ and the fact that $d$ is not constant zero on any interval with positive measure. But the set of critical points of $f$ is the Smith-Volterra-Cantor set, which is not a $\lambda$-null set.
\end{Example}

\section{Non-smooth Activations in Residual Flows}

\subsection{Planar Flows} \label{Appendix:PlanarFlows}
Theorem \ref{PlanarFlows:Bijectivity} presented here corresponds to a generalization of the proof given by \citet{rezendeVIwNF} since not only the smooth activation function $h(x)=\tanh(x)$ is considered. However, the argument in the proof is very similar.
\begin{proof}[\textbf{Proof of Theorem \ref{PlanarFlows:Bijectivity}}]
	Let $y \in \R^n$ be arbitrary. Then we have to show that the following equation has a unique solution
	\begin{align}
	f_\text{P}(x) = x + u h\left(w^T x + b\right) = y. \label{Appendix:Planar_Flows_original_eq}
	\end{align}
	If $w=0$ holds, then a unique solution is given by $x = y - uh(b)$, wherefrom bijectivity results. For this reason, let $w \neq 0$ in the following, thus $w$ spans a one-dimensional linear subspace $W$ in $\R^n$. Consequently, each element $x \in \R^n$ has a unique orthogonal decomposition $x = x_\parallel + x_\perp$ with $x_\parallel \in W$ and $x_\perp \in W^\perp := \{x \in \R^n \mid w^Tx = 0\}$. Due to the orthogonality of $x_\perp$ and $w$, the following solution of the orthogonal component depending on the parallel one can be inferred from eq. \eqref{Appendix:Planar_Flows_original_eq}
	\begin{align}
	x_\perp = y - x_\parallel - uh\left(w^Tx_\parallel + b\right).
	\end{align}
	Since $W$ is a one-dimensional linear subspace, there is a unique $\lambda \in \R$ with $x_\parallel = w \tfrac{\lambda}{w^Tw}$. By using this representation, the original equation multiplied by $w^T$ from the left yields
	\begin{align}
	w^T y &= w^T x_\perp + w^Tw \frac{\lambda}{w^Tw} + w^T u h(\lambda + b) = \lambda + w^T u h(\lambda + b) \label{Appendix:Planar_Flows_eq_1}\\
	&= \psi(\lambda + b) - b, \notag
	\end{align}
	where the last equation in \eqref{Appendix:Planar_Flows_eq_1} follows again by $w^Tx_\perp = 0$. The assumed bijectivity of $\psi$ leads to the unique existence of a $\lambda_y \in \R$, which solves the equation above. In addition, this implies the existences of $x_\parallel$ and $x_\perp$, thus 
	\begin{align*}
	    x = x_\parallel + x_\perp = y - uh\left(\lambda_y + b\right)
	\end{align*}
	is the unique solution of the initial equation \eqref{Appendix:Planar_Flows_original_eq} and consequently the bijectivity of $f_\text{P}$ follows.
\end{proof}

\begin{proof}[\textbf{Proof of Theorem \ref{PlanarFlows:MainTheorem}}]
	In case $w=0$, $f_\text{P}$ represents an affine linear mapping describing a diffeomorphism, hence an $\L$-diffeomorphism. For this reason, let $w \neq 0$. Consequently, $w$ spans a one-dimensional linear subspace $W$, from which follows a unique orthogonal decomposition of the vector space $\R^n = W + W^\perp$ as a direct sum. This leads to a characterization of the vector space as a disjoint union of hyperplanes, i.e.,
	\begin{align*}
		\R^n = \dot{\bigcup_{\lambda \in  \R}} H_\lambda \quad\quad \text{with} \quad\quad H_\lambda := \left\{ \left. w \frac{\lambda}{w^Tw} + x_\perp \right| x_\perp \in W^\perp \right\}.
	\end{align*}
	Consider the function $\tau : \R^n \to \R$ with $\tau(x) = w^Tx + b$ mapping every element of a hyperplane $H_\lambda$ to the same value $\lambda + b$. Since the activation function $h$ is not continuously differentiable on the countable set $N$, we remove all hyperplanes mapping to $N$ under $\tau$. So we define
	\begin{align*}
		H_N := \bigcup_{n \in N} H_ {n - b} \quad\quad \text{such that} \quad\quad \tau(H_N) = N.
	\end{align*}
	Because $\tau$ is continuous and $N$ is closed in $\R$, it follows that $H_N = \tau^{\minus1}(N)$ is closed in $\R^n$. Moreover, $H_N$ as a countable union of hyperplanes is also a null set w.r.t. the Lebesgue measure $\lambda^n$ due to the subadditivity. Accordingly, the restriction $f_\text{P}:\R^n \setminus H_N \to \R^n \setminus f_\text{P}(H_N)$ is bijective and continuously differentiable. Furthermore, $H_N$ is a closed $\lambda^n$-null set. We note that the image of a hyperplane $H_\lambda$ under $f_\text{P}$ is also a hyperplane; hence $f_\text{P}(H_N)$ forms a countable union of hyperplanes which is a null set. Finally, it is left to show that the set of critical points of the restricted planar flow is also a null set. By the matrix determinant lemma we get for the set of critical points the set equality
	\begin{align*}
		  C_{f_\text{P}}:=\left\{x \in \R^n \setminus H_N \left| \det J_{f_\text{P}}(x) = 0 \right.\right\} 
		= \left\{x \in \R^n \setminus H_N \left| 1 + w^Tu h'(\tau(x)) = 0 \right.\right\}.
	\end{align*}
	This gives $\tau(C_{f_\text{P}})=C$, which we have assumed to be a countable set. Hence
	\begin{align*}
		C_{f_\text{P}} = \tau^{\minus1}(C)= \bigcup_{c \in C} \tau^{\minus1}(\{c\}) = \bigcup_{c \in C} H_{c - b},
	\end{align*}
	which is as a countable union of hyperplanes a $\lambda^n$-null set.
\end{proof}

\subsubsection{Examples for some Activations}\label{Examples:PlanarFlow_Activations}
In the following, we infer conditions for particular choices of activation functions for a planar flow $f_\text{P}$. A visualization of the crucial function for each proof can be found in Figure \ref{PlanarFlow:Examples_fig}, where the conditions on the flow are fulfilled, just (not) fulfilled and not fulfilled anymore.

\begin{Lemma}[$\text{ReLU}$]
    If $w^Tu>-1$ holds, then $f_\text{P}$ with activation $\text{ReLU}$ is an $\L$-diffeomorphism.
\end{Lemma}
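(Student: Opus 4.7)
The strategy is to invoke the two structural theorems already proved for planar flows and reduce the entire question to an elementary one-dimensional check. Concretely, Theorem \ref{PlanarFlows:Bijectivity} reduces bijectivity of $f_\text{P}$ to bijectivity of the scalar map $\psi(\lambda) = \lambda + w^T u\, h(\lambda)$, and Theorem \ref{PlanarFlows:MainTheorem} reduces the $\L$-diffeomorphism property (given bijectivity) to finding a countable, closed set $N \subset \R$ off which $h$ is $C^1$ and such that the scalar zero set $C = \{x \in \R \setminus N \mid 1 + w^T u\, h'(x) = 0\}$ is countable. So my plan is: first verify bijectivity of $\psi$ under $w^T u > -1$, then exhibit a suitable $N$ and show $C$ is (in fact) empty.

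For bijectivity, I would split the scalar map according to the two linear pieces of $\text{ReLU}$. For $\lambda \leq 0$ one has $\psi(\lambda) = \lambda$, and for $\lambda > 0$ one has $\psi(\lambda) = (1 + w^T u)\lambda$. Under the hypothesis $1 + w^T u > 0$ both pieces are strictly increasing linear functions, and they agree at $\lambda = 0$ (both giving $0$), so $\psi$ is a continuous, strictly monotone piecewise linear function with slopes $1$ and $1 + w^T u$ on the two half-lines; hence a bijection of $\R$. Theorem \ref{PlanarFlows:Bijectivity} then delivers bijectivity of $f_\text{P}$.

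For the remaining conditions, the natural choice is $N := \{0\}$, which is countable and closed, and on $\R \setminus \{0\}$ the ReLU is $C^1$ with $h'(x) = 0$ for $x < 0$ and $h'(x) = 1$ for $x > 0$. Plugging into the defining equation for $C$, one gets $1 + w^T u \cdot 0 = 1 \neq 0$ for $x < 0$ and $1 + w^T u > 0$ for $x > 0$ by hypothesis, so $C = \emptyset$, which is trivially countable. Theorem \ref{PlanarFlows:MainTheorem} then yields that $f_\text{P}$ is an $\L$-diffeomorphism and in particular a proper flow.

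There is essentially no hard step here; the only subtlety is the explicit choice of the removal set $N = \{0\}$, which corresponds to the single point of non-differentiability of ReLU, and the realization that under $w^T u > -1$ both slopes of $\psi$ are strictly positive, making the single-point removal sufficient. All the heavy measure-theoretic lifting has already been done in Theorem \ref{PlanarFlows:MainTheorem} and its underlying Theorem \ref{Generalization_of_CFV}.
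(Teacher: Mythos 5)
Your proof is correct and follows essentially the same route as the paper: reduce bijectivity to strict monotonicity of the piecewise-linear scalar map $\psi$ (slopes $1$ and $1+w^Tu>0$), take $N=\{0\}$ as the removal set, and observe that the critical-point set $C$ is empty before invoking Theorem \ref{PlanarFlows:MainTheorem}. The only cosmetic difference is that the paper carries the bias $b$ through the computation of $\psi$, which does not affect the argument.
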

\begin{proof}
    Consider the mapping $\psi:\R \to \R$ given by
    \begin{align*}
        \psi(\lambda) = \lambda + w^Tu\, \text{ReLU}(\lambda + b) = \left\{\begin{array}{ll}
         \lambda &, \lambda < -b  \\
         \lambda + w^Tu (\lambda + b)     &, \lambda \geq -b 
        \end{array} \right..
    \end{align*}
    Because of the inequality $w^Tu > -1$, this mapping is strictly monotonically increasing and thus obviously bijective. Consequently, the bijectivity of the planar flow $f_\text{P}$ follows from Theorem \ref{PlanarFlows:Bijectivity}. Furthermore, the $\text{ReLU}$ activation is continuously differentiable on $\R \setminus \{0\}$ and it holds for all $\lambda \in \R \setminus \{0\}$
    \begin{align*}
        1 + w^Tu\, \text{ReLU}'(\lambda)= \left\{ \begin{array}{ll}
            1 &, \lambda < 0  \\
            1 + w^Tu &, \lambda > 0 
        \end{array} \right. \neq 0.
    \end{align*}
    Therefore, the planar flow $f_\text{P}$ with activation $\text{ReLU}$ has no critical points; thus, the claim follows from Theorem \ref{PlanarFlows:MainTheorem}.
\end{proof}

\begin{Lemma}[$\tanh$]
    If $w^Tu\geq-1$ holds, then $f_\text{P}$ with activation $\tanh$ is an $\L$-diffeomorphism.
\end{Lemma}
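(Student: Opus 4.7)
The plan is to mirror the structure of the preceding ReLU lemma: reduce to the one-dimensional map $\psi(\lambda) = \lambda + w^T u \tanh(\lambda + b)$, invoke Theorem \ref{PlanarFlows:Bijectivity} to obtain global bijectivity of $f_\text{P}$, and then apply Theorem \ref{PlanarFlows:MainTheorem} with $N = \emptyset$ since $\tanh \in C^\infty(\R)$. Only the set $C = \{\lambda \in \R \mid 1 + w^T u\, \tanh'(\lambda) = 0\}$ requires genuine attention.

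For bijectivity of $\psi$, I compute $\psi'(\lambda) = 1 + w^T u\, \mathrm{sech}^2(\lambda + b) = 1 + w^T u\,(1 - \tanh^2(\lambda+b))$ and distinguish cases according to the sign of $w^T u$. If $w^T u \geq 0$, then $\psi'(\lambda) \geq 1 > 0$. If $-1 < w^T u < 0$, then using $1 - \tanh^2 \in (0,1]$ we get $\psi'(\lambda) \geq 1 + w^T u > 0$. The delicate case is $w^T u = -1$, which explains why the hypothesis is a non-strict inequality: here $\psi'(\lambda) = \tanh^2(\lambda+b) \geq 0$ and vanishes only at the single point $\lambda = -b$. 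Thus $\psi'$ is positive off an isolated point, which still yields strict monotonicity. Combined with continuity and $\psi(\lambda) \to \pm\infty$ as $\lambda \to \pm\infty$ (since $w^T u \tanh$ is bounded), $\psi$ is a bijection on $\R$.

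For countability of $C$, I rewrite the defining equation as $1 - \tanh^2(\lambda) = -1/(w^T u)$ (whenever $w^T u \neq 0$). If $w^T u > 0$ or $0 > w^T u > -1$, the right-hand side lies outside the range $(0,1]$ of $1 - \tanh^2$, so $C = \emptyset$. For $w^T u = 0$ trivially $C = \emptyset$. The only case producing a solution is $w^T u = -1$, where the equation reduces to $\tanh(\lambda) = 0$, giving $C = \{0\}$. In every case $C$ is finite, hence countable, and Theorem \ref{PlanarFlows:MainTheorem} applies with $N = \emptyset$.

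I do not expect any real obstacle; the only subtlety is the boundary case $w^T u = -1$, where both the bijectivity of $\psi$ and the finiteness of $C$ hinge on the same fact that $1 - \tanh^2$ attains its maximum value $1$ exactly at the single point $\lambda + b = 0$. This isolated degeneracy is tolerated by the $\L$-diffeomorphism framework precisely because Theorem \ref{PlanarFlows:MainTheorem} only asks that $C$ be countable rather than empty, which is what makes the weaker hypothesis $w^T u \geq -1$ sufficient here (compared to the strict inequality in the ReLU case).
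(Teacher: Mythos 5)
Your proposal is correct and follows essentially the same route as the paper: showing $\psi'(\lambda)=1+w^Tu\,(1-\tanh^2(\lambda+b))\geq 0$ with equality only at the single point $\lambda=-b$ in the boundary case $w^Tu=-1$, deducing strict monotonicity and surjectivity from the boundedness of $\tanh$, and then checking that $C$ is at most a singleton so that Theorem \ref{PlanarFlows:MainTheorem} applies with $N=\emptyset$. Your explicit case analysis and the remark on why the non-strict inequality $w^Tu\geq-1$ suffices are slightly more detailed than the paper's version but mathematically identical.
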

\begin{proof}
    Consider the function $\psi: \R \to \R$ given by $\psi(\lambda) =  \lambda + w^Tu \tanh(\lambda)$ with derivative $\psi'(\lambda)= 1 + \frac{w^Tu}{\cosh(\lambda)^2}$. Since the hyperbolic cosine is $1$ only at $0$ and otherwise always greater than $1$, we get with the assumed inequality $\psi'(\lambda) \geq 0$ and equality only if $\lambda=0$ and $w^Tu = -1$. Therefore, the function $\psi$ is strictly monotonically increasing, hence injective. Moreover, the surjectivity follows from the boundedness of the hyperbolic tangent. Consequently, the bijectivity of the planar flow $f_\text{P}$ follows from Theorem \ref{PlanarFlows:Bijectivity}. The activation function is continuously differentiable, and as seen earlier, the equation $1+w^Tu \tanh(\lambda)=0$ is only satisfied if $w^Tu=-1$ and $\lambda = 0$. In any case, the set of critical points is a countable set, so the claim follows from Theorem \ref{PlanarFlows:MainTheorem}.
\end{proof}

\begin{Lemma}[$\text{ELU}$]
    If $w^Tu> \max(-1, -\frac{1}{\alpha})$ holds, then $f_\text{P}$ with activation $\text{ELU}$ $(\alpha > 0)$ is an $\L$-diffeomorphism.
\end{Lemma}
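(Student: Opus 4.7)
The plan is to mirror the $\text{ReLU}$ and $\tanh$ examples above: perform a one-dimensional reduction via $\psi$, check bijectivity by an elementary monotonicity and limit argument, then estimate the critical set of $1 + w^Tu\cdot \text{ELU}'$, and invoke Theorems \ref{PlanarFlows:Bijectivity} and \ref{PlanarFlows:MainTheorem}. I would begin by defining
\begin{align*}
\psi(\lambda) = \lambda + w^Tu\, \text{ELU}(\lambda + b),
\end{align*}
which has a single kink at $\lambda = -b$ and piecewise derivative
\begin{align*}
\psi'(\lambda) = \begin{cases} 1 + w^Tu, & \lambda > -b,\\ 1 + \alpha\, w^Tu\, e^{\lambda + b}, & \lambda < -b. \end{cases}
\end{align*}

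The bijectivity step is a short case analysis. The right branch is positive iff $w^Tu > -1$. On the left branch $e^{\lambda + b} \in (0,1)$, so $\psi'$ lies strictly between $1$ and $1 + \alpha\, w^Tu$, making it positive everywhere on $(-\infty,-b)$ provided $1 + \alpha\, w^Tu \geq 0$, i.e., $w^Tu \geq -1/\alpha$. The hypothesis $w^Tu > \max(-1,-1/\alpha)$ therefore yields $\psi'(\lambda) > 0$ on $\R\setminus\{-b\}$, so $\psi$ is strictly increasing on $\R$. Surjectivity follows from $\text{ELU}(\lambda + b) \to -\alpha$ as $\lambda\to-\infty$ (so $\psi\to-\infty$) and dominance of the linear term as $\lambda\to\infty$ (so $\psi\to\infty$). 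Theorem \ref{PlanarFlows:Bijectivity} then gives bijectivity of $f_\text{P}$.

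For the $\L$-diffeomorphism conclusion, take $N = \{0\}$, which is closed and countable; $\text{ELU}$ is continuously differentiable on $\R\setminus\{0\}$ (the left- and right-derivatives at $0$ are $\alpha$ and $1$, agreeing only when $\alpha=1$). The critical set is
\begin{align*}
C = \left\{\lambda \in \R \setminus \{0\} \;\middle|\; 1 + w^Tu\, \text{ELU}'(\lambda) = 0 \right\}.
\end{align*}
For $\lambda > 0$ the equation becomes $1 + w^Tu = 0$, excluded by hypothesis. For $\lambda < 0$ it rearranges to $e^{\lambda} = -1/(\alpha\, w^Tu)$, whose right-hand side must lie in $(0,1)$; a short sign analysis shows this forces $w^Tu < -1/\alpha$, also excluded. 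Hence $C = \emptyset$ is trivially countable, and Theorem \ref{PlanarFlows:MainTheorem} concludes. I do not anticipate any real obstacle; the only step deserving a moment's care is aligning the threshold $\max(-1,-1/\alpha)$ with the piecewise positivity of $\psi'$, which falls out of the explicit formulas above.
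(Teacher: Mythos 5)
Your proposal is correct and follows essentially the same route as the paper: reduce to the one-dimensional map $\psi$, verify $\psi'>0$ on both branches using $e^{\lambda+b}\in(0,1)$ and the threshold $\max(-1,-\tfrac{1}{\alpha})$, get surjectivity from the limits, and then show the critical set of $1+w^Tu\,\text{ELU}'$ is empty before invoking Theorems \ref{PlanarFlows:Bijectivity} and \ref{PlanarFlows:MainTheorem}. Your explicit choice $N=\{0\}$ and the sign analysis showing $e^{\lambda}=-1/(\alpha\,w^Tu)$ has no admissible solution are, if anything, slightly cleaner than the paper's corresponding steps.
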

\begin{proof}
    Consider the function $\psi:\R \to \R$ with
    \begin{align*}
        \psi(\lambda) = \lambda + w^Tu\, \text{ELU}(\lambda +b) = \left\{ \begin{array}{ll}
        \lambda + w^Tu \alpha \left( e^{\lambda+b} -1 \right)     &, \lambda \leq -b  \\
        \lambda + w^Tu(\lambda + b)     &, \lambda > -b
        \end{array} \right..
    \end{align*}
    This function is continuously differentiable on $\R \setminus \{b\}$ with derivative given by
    \begin{align*}
        \psi'(\lambda) = \left\{ \begin{array}{ll}
        1 + w^Tu \alpha e^{\lambda+b}     &, \lambda < -b  \\
        1 + w^Tu     &, \lambda > -b
        \end{array} \right..
    \end{align*}
    By the assumed inequality and $e^{\lambda + b} \in (0,1)$ for $\lambda < -b$, we obtain for $\lambda < -b$
    \begin{align}
        \psi'(\lambda)= 1+ w^Tu \alpha e^{\lambda + b} > 1+ \max\left(-1,- \frac{1}{\alpha} \right)\alpha e^{\lambda + b} > 1 + \max(-\alpha, -1) \geq 0. \label{PlanarFlow:ELU_1}
    \end{align}
    In addition, we get the positivity for the other case $\lambda > -b$
    \begin{align}
        \psi'(\lambda) = 1 + w^Tu > 1+ \max\left( -1, -\frac{1}{\alpha} \right) \geq 0. \label{PlanarFlow:ELU_2}
    \end{align}
    Because of the limit $\lim_{\lambda \searrow -b} \psi(\lambda) = \psi(-b)= -b$, the function $\psi$ is strictly monotonically increasing, thus injrective. Furthermore, surjectivity results from the mean value theorem; hence $f_\text{P}$ is bijective by Theorem \ref{PlanarFlows:Bijectivity}. Additionally, the activation function $\text{ELU}$ is continuously differentiable on $\R \setminus \{b\}$, and similar to \eqref{PlanarFlow:ELU_1} and \eqref{PlanarFlow:ELU_2} the inequality $\psi'(\lambda) = 1 + w^Tu \text{ELU}(\lambda)> 0$ follows for all $\lambda \neq 0$. Therefore we can apply Theorem \ref{PlanarFlows:MainTheorem}, which shows the claim.
\end{proof}

\begin{figure*}[t]
	\vskip 0.2in
	\centering
	\begin{subfigure}[c]{0.45\textwidth}
		\centering
		\resizebox{\linewidth}{!}{
		\begin{tikzpicture}
		\begin{axis}[
		axis x line=center,
		axis y line=center,
		xtick={-3,...,3},
		ytick={-3,...,3},
		xlabel={$\lambda$},
		ylabel={$\psi(\lambda)$},
		xlabel style={below},
		ylabel style={left},
		xmin=-3.5,
		xmax=3.5,
		ymin=-3.5,
		ymax=3.5,
		legend pos= north west,
		legend cell align = {left},
		legend style={nodes={scale = 0.75}, line width = 1}]
		\addplot [mark=none, color=green!50!black, line width=1 ,domain=-3.5:0] {x};
		
		\addplot [mark=none, color=blue!50!black, line width=1 ,domain=-3.5:0] {x};
		
		\addplot [mark=none, color=magenta!50!black, line width=1 ,domain=-3.5:0] {x};
		\legend{$w^Tu = 1$, $w^Tu = -1$, $w^Tu = -2$};
		
		\addplot [mark=none,color=green!50!black, line width=1, domain=0:3.5, smooth] {2*x};
		\addplot [mark=none,color=blue!50!black, line width=1, domain=0:3.5, smooth] {0};
		\addplot [mark=none,color=magenta!50!black, line width=1, domain=0:3.5, smooth] {-x};
		\end{axis}
		\end{tikzpicture}
	}
	\subcaption{$\text{ReLU}$ activation}
	\end{subfigure}
	\begin{subfigure}[c]{0.45\textwidth}
		\centering
		\resizebox{\linewidth}{!}{
			\begin{tikzpicture}
			\begin{axis}[
			samples = 200,
			axis x line=center,
			axis y line=center,
			xtick={-3,...,3},
			ytick={-3,...,3},
			xlabel={$\lambda$},
			ylabel={$\psi(\lambda)$},
			xlabel style={below},
			ylabel style={left},
			xmin=-3.5,
			xmax=3.5,
			ymin=-3.5,
			ymax=3.5,
			legend pos= north west,
			legend cell align = {left},
			legend style={nodes={scale = 0.75}, line width = 1}]
			\addplot [mark=none, color=green!50!black, line width=1 ,domain=-3.5:3.5] {x + tanh(x)};
			
			\addplot [mark=none, color=blue!50!black, line width=1 ,domain=-3.5:3.5] {x - tanh(x)};
			
			\addplot [mark=none, color=magenta!50!black, line width=1 ,domain=-3.5:3.5] {x - 2* tanh(x)};
			\legend{$w^Tu = 1$, $w^Tu = -1$, $w^Tu = -2$};
			\end{axis}
			\end{tikzpicture}
		}
	\subcaption{$\tanh$ activation}
	\end{subfigure}

	\begin{subfigure}[c]{0.45\textwidth}
		\centering
		\resizebox{\linewidth}{!}{
			\begin{tikzpicture}
			\begin{axis}[
			axis x line=center,
			axis y line=center,
			xtick={-3,...,3},
			ytick={-3,...,3},
			xlabel={$\lambda$},
			ylabel={$\psi(\lambda)$},
			xlabel style={below},
			ylabel style={left},
			xmin=-3.5,
			xmax=3.5,
			ymin=-3.5,
			ymax=3.5,
			legend pos= north west,
			legend cell align = {left},
			legend style={nodes={scale = 0.75}, line width = 1}]
			\addplot [mark=none, color=green!50!black, line width=1 ,domain=-3.5:0] {x + 2*(exp(x) - 1)};
			
			\addplot [mark=none, color=blue!50!black, line width=1 ,domain=-3.5:0] {x - 0.5* 2*(exp(x) - 1)};
			
			\addplot [mark=none, color=magenta!50!black, line width=1 ,domain=-3.5:0] {x - 0.9* 2* (exp(x) - 1)};
			\legend{$w^Tu = 1$, $w^Tu = -0.5$, $w^Tu = -0.9$};
			
			\addplot [mark=none,color=green!50!black, line width=1, domain=0:3.5, smooth] {x + x};
			\addplot [mark=none,color=blue!50!black, line width=1, domain=0:3.5, smooth] {x - 0.5*x};
			\addplot [mark=none,color=magenta!50!black, line width=1, domain=0:3.5, smooth] {x - 0.9*x};
			\end{axis}
			\end{tikzpicture}
		}
		\subcaption{$\text{ELU}$ activation with $\alpha = 2$}
	\end{subfigure}
	\begin{subfigure}[c]{0.45\textwidth}
		\centering
		\resizebox{\linewidth}{!}{
			\begin{tikzpicture}
			\begin{axis}[
			samples = 200,
			axis x line=center,
			axis y line=center,
			xtick={-3,...,3},
			ytick={-3,...,3},
			xlabel={$\lambda$},
			ylabel={$\psi(\lambda)$},
			xlabel style={below},
			ylabel style={left},
			xmin=-3.5,
			xmax=3.5,
			ymin=-3.5,
			ymax=3.5,
			legend pos= north west,
			legend cell align = {left},
			legend style={nodes={scale = 0.75}, line width = 1}]
			\addplot [mark=none, color=green!50!black, line width=1 ,domain=-3.5:3.5] {x + 0.5*ln(1 + exp(x))};
			
			\addplot [mark=none, color=blue!50!black, line width=1 ,domain=-3.5:3.5] {x - ln(1+exp(x))};
			
			\addplot [mark=none, color=magenta!50!black, line width=1 ,domain=-3.5:3.5] {x - 2* ln(1 + exp(x))};
			\legend{$w^Tu = 0.5$, $w^Tu = -1$, $w^Tu = -2$};
			\end{axis}
			\end{tikzpicture}
		}
		\subcaption{$\text{Softplus}$ activation}
	\end{subfigure}
	\caption{Visualization of the function $\psi$ considered in the proofs of Section \ref{Examples:PlanarFlow_Activations}. For each of the activations $\text{ReLU}, \tanh, \text{ELU}(\alpha = 2)$ and $\text{Softplus}$, we plotted the function $\psi$ with $b=0$ for three different choices of $w^Tu$; the conditions for an $\L$-diffeomorphism are satisfied (green), just (not) satisfied (blue), and not satisfied (magenta). }
	\label{PlanarFlow:Examples_fig}
	\vskip -0.2in
\end{figure*}
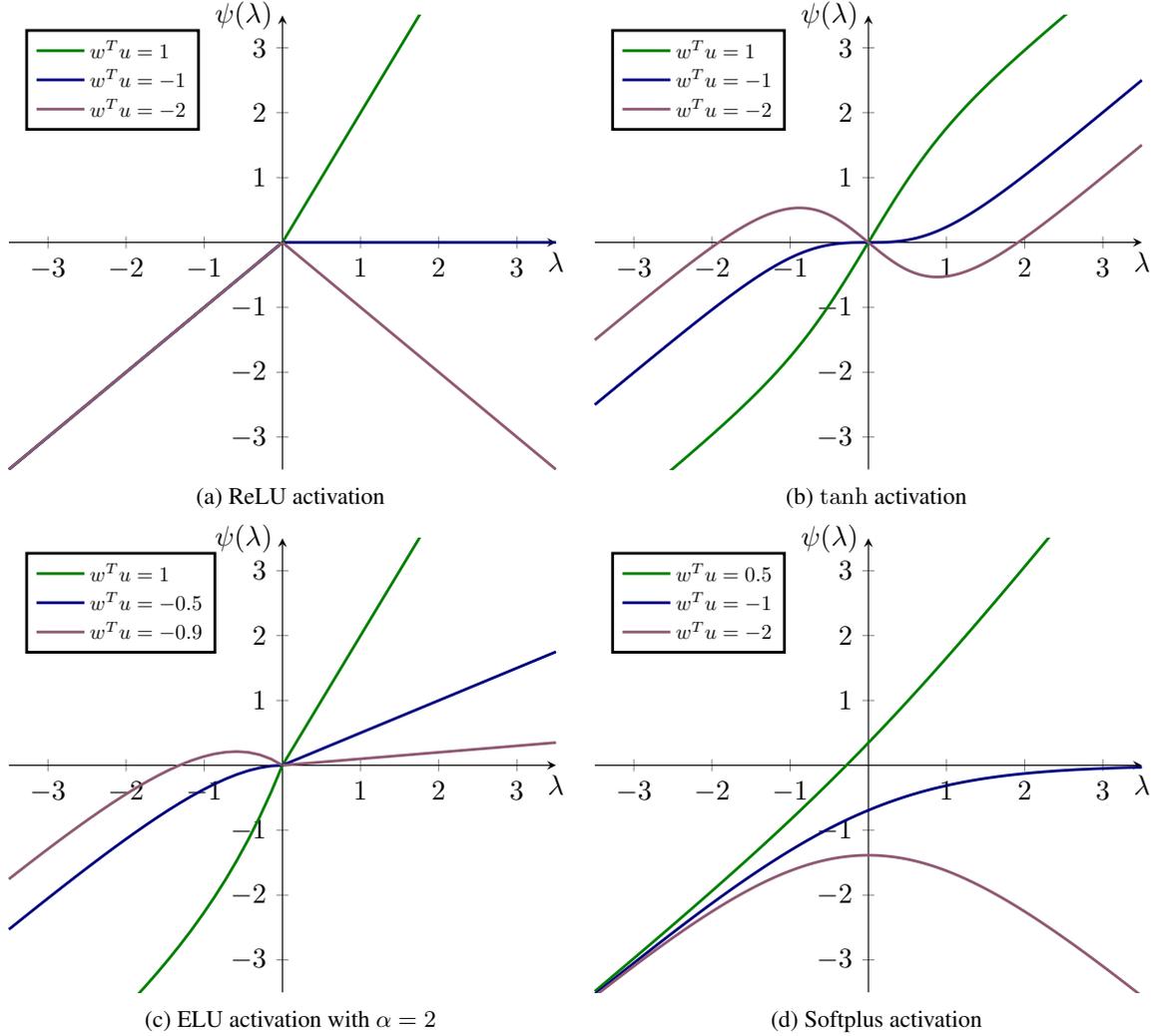

\begin{Lemma}[$\text{Softplus}$]
    If $w^Tu> -1$ holds, then $f_\text{P}$ with activation $\text{Softplus}$ is an $\L$-diffeomorphism.
\end{Lemma}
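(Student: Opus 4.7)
The plan is to follow exactly the template set by the preceding lemmas for $\text{ReLU}$, $\tanh$, and $\text{ELU}$: reduce to the one-dimensional function $\psi(\lambda) = \lambda + w^T u\, \text{Softplus}(\lambda + b)$, establish bijectivity of $\psi$ so that Theorem \ref{PlanarFlows:Bijectivity} applies, and then verify the hypotheses of Theorem \ref{PlanarFlows:MainTheorem}. The key simplification compared to the $\text{ReLU}$ and $\text{ELU}$ cases is that $\text{Softplus}$ is $C^\infty$ on all of $\R$, with derivative the sigmoid $\sigma(x) = e^x/(1+e^x) \in (0,1)$, so no case split between smooth and non-smooth pieces is needed.

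For the bijectivity of $\psi$, I would compute $\psi'(\lambda) = 1 + w^T u\, \sigma(\lambda + b)$ and argue that the assumption $w^T u > -1$ forces $\psi'(\lambda) > 0$ for every $\lambda \in \R$. A short case split handles this: if $w^T u \geq 0$ then $\psi'(\lambda) \geq 1$, while if $w^T u \in (-1, 0)$ then $\psi'(\lambda) > 1 + w^T u > 0$ since $\sigma(\lambda + b) < 1$. Strict monotonicity gives injectivity. Surjectivity follows from the limits $\psi(\lambda) \to -\infty$ as $\lambda \to -\infty$ (using $\text{Softplus}(\lambda + b) \to 0$) and $\psi(\lambda) \to +\infty$ as $\lambda \to +\infty$ (using the asymptotic $\text{Softplus}(\lambda + b) \sim \lambda + b$ together with $1 + w^T u > 0$), combined with continuity and the intermediate value theorem. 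Theorem \ref{PlanarFlows:Bijectivity} then yields bijectivity of $f_\text{P}$.

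For the $\L$-diffeomorphism conclusion, since $\text{Softplus}$ is continuously differentiable on all of $\R$, the exceptional set $N$ in Theorem \ref{PlanarFlows:MainTheorem} may be chosen to be $\emptyset$. The critical-point set $C$ from that theorem becomes the solution set of $1 + w^T u\, \sigma(\lambda) = 0$, which by the very same positivity estimate used above is empty and therefore trivially countable. Applying Theorem \ref{PlanarFlows:MainTheorem} delivers that $f_\text{P}$ is an $\L$-diffeomorphism, and in particular a proper flow. I do not anticipate a genuine obstacle here: the smoothness of $\text{Softplus}$ together with the bounded range of $\sigma$ reduces both the bijectivity check and the empty-critical-set check to a single monotonicity estimate, making this the cleanest of the four activation-function lemmas.
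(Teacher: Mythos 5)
Your proposal is correct and follows essentially the same route as the paper's proof: compute $\psi'(\lambda) = 1 + w^Tu\,\sigma(\lambda+b)$, use $\sigma \in (0,1)$ together with $w^Tu > -1$ to get strict positivity (hence injectivity), establish the limits $\psi(\lambda)\to\pm\infty$ for surjectivity, and then note that Softplus is globally $C^1$ with empty critical set so that Theorem \ref{PlanarFlows:MainTheorem} applies with $N=\emptyset$. The only cosmetic difference is that you handle the positivity by a case split on the sign of $w^Tu$ while the paper uses a single inequality chain, and you invoke the asymptotic $\mathrm{Softplus}(\lambda+b)\sim\lambda+b$ where the paper uses the bound $\log(1+e^{\lambda+b})\le\log(2e^{\lambda+b})$; both are sound.
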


\begin{proof}
    For the function $\psi:\R \to \R$ given by $\psi(\lambda) =  \lambda + w^Tu \text{Softplus}(\lambda + b)$ with $\text{Softplus}(\lambda)= \log\left(1+e^{\lambda + b} \right)$ the derivative is
    \begin{align*}
        \psi'(\lambda) = 1 + w^Tu \frac{e^{\lambda + b}}{1 + e^{\lambda + b}} = 1 + w^Tu \frac{1}{1 + e^{\minus(\lambda + b)}}.
    \end{align*}
    Since the range of the factor $\frac{1}{1 + e^{\minus(\lambda + b)}}$ is in the interval $(0,1)$, the assumed inequality results for all $\lambda \in \R$ in
    \begin{align}
        \psi'(\lambda) > 1- \frac{1}{1 + e^{\minus(\lambda + b)}} > 0.
    \end{align}
    Consequently, the function $\psi$ is strictly monotonically increasing, hence injective. By continuity, the limit $\lim_{\lambda \to - \infty} = -\infty$ results. Without loss of generality, we consider $\lambda > -b$ in the following. In this case, the inequality $\log(1 + e^{\lambda +b}) \leq \log(2 e^{\lambda +b})$ holds, and we obtain for $w^Tu < 0$
    \begin{align*}
        \lim_{\lambda \to \infty} \psi(\lambda) \geq \lim_{\lambda \to \infty} \lambda + w^Tu \log\left(2e^{x+b}\right) = \lim_{\lambda \to \infty} \lambda + w^Tu \log(2) \left(\lambda + b\right) = \infty.
    \end{align*}
    For the other case $w^Tu \geq 0$, results $\lim_{\lambda \to \infty} \psi(\lambda) \geq \lim_{\lambda \to \infty} \lambda = \infty$. Thus, the surjectivity follows from the mean value theorem. Therefore, we get the bijectivity from Theorem \ref{PlanarFlows:Bijectivity}. In addition, the activation function $\text{Softplus}$ is everywhere continuously differentiable and has not critical points, so the claim follows from Theorem \ref{PlanarFlows:MainTheorem}.
\end{proof}

\subsection{Radial Flows}\label{Appendix:RadialFlows}
Theorem \ref{RadialFlows:Bijectivity} presented here corresponds to a generalization of the proof given by \citet{rezendeVIwNF}, since not only the localization function $h(r)=\tfrac{1}{\alpha + r}$ is considered. However, the arguments in the proof are very similar.
\begin{proof}[\textbf{Proof of Theorem \ref{RadialFlows:Bijectivity}}]
	Let $y \in \R^n$ arbitrary. We have to show for bijectivity that the following equation has a unique solution
	\begin{align}
		f_\text{R}(x) = x + \beta h(r) (x-x_0)=y \quad\quad \text{where} \quad\quad r:= \|x-x_0\|_2. \label{Appendix:Radial_Bijectivity_1}
	\end{align} 
	If $y=x_0$, we then obtain the equality
	\begin{align*}
		0 = \left\|y-x_0\right\|_2 = \left\|x-x_0 + \beta h(r) (x-x_0) \right\|_2 = r + \beta h(r)r = \psi(r)
	\end{align*}
	after rearranging equation \eqref{Appendix:Radial_Bijectivity_1} and taking the Euclidean norm. Since $\psi(0) = 0$ and $\psi$ was assumed to be bijective, it follows that $\|x-x_0\|=0$ which is equivalent to $x=x_0$ because of the definiteness of the Euclidean distance. Hereafter let $y \in \R^n \setminus \{x_0\}$, i.e., $r=\|x-x_0\|_2> 0$. In this case, each element $x - x_0$ with $x \in \R^n \setminus \{x_0\}$ can be expressed unambiguously as the product of its projection on the unit sphere and its Euclidean distance. Thus, for $x \in \R^n \setminus \{x_0\}$ there exists a unique normalized direction vector $\hat{x} \in S_1(x_0) := \{x \in \R^n \mid \|x-x_0\|_2 = 1\}$ such that $ x= x_0 + r\hat{x}$ where $r:= \|x-x_0\|_2$. Substituting this representation into equation \eqref{Appendix:Radial_Bijectivity_1}, one obtains after conversion 
	\begin{align}
		y-x_0 = \hat{x} \left( r + \beta h(r) r \right) = \hat{x} \psi(r), \label{Appendix:Radial_Bijectivity_2}
	\end{align}
	and by taking of the norm of this
	\begin{align*}
		0 < \|y-x_0\|_2 = \|\hat{x} \psi(r)\|_2 = \psi(r).
	\end{align*}
	Because of the bijectivity of $\psi$, the unique existence of a radius $r_y > 0$ around the centering point results. Due to the fact that $\psi(r)>0$ remains valid for $r>0$, equation \eqref{Appendix:Radial_Bijectivity_2} gives the following unambiguous expression of the direction vector $\hat{x}$
	\begin{align*}
		\hat{x}_y := \frac{y-x_0}{\psi(r_y)} = \frac{y-x_0}{r_y + \beta h(r_y) r_y}.
	\end{align*}
	Hence $x = x_0 + r_y \hat{x}_y$ is the unique solution of the original equation \eqref{Appendix:Radial_Bijectivity_1}, concluding finally that $f_\text{R}$ is bijective.
\end{proof}

\begin{proof}[\textbf{Proof of Theorem \ref{RadialFlows:MainTheorem}}]
	According to the requirement, the localization function is not continuously differentiable for all radii, so spheres with such distances around the centering point must be removed from the domain of $f_\text{R}$. Furthermore, the point $x_0$ corresponding to a radius of $0$ must be eliminated in order to restrict the localization function to an open set, thus allowing us to verily speak of differentiability. For this purpose we define
	\begin{align*}
	S:= \bigcup_{r \in N} S_r(x_0) \cup \{x_0\} \quad\quad \text{where}\quad\quad S_r(x_0):=\left\{ x \in \R^n \mid \|x-x_0\|_2=r  \right\}.
	\end{align*}
	Since the shifted Euclidean norm $\tau(x) := \|x-x_0\|_2 $ is continuous, it follows that the preimage of the closed set $N \cup \{0\}$ is also closed. Therefore, the set of eliminated points
	\begin{align*}
		S= \bigcup_{r \in N} \tau^{\minus1}\left(\{r\}\right) \cup \tau^{\minus1}(\{0\})= \tau^{\minus1}\left(N \cup \{0\}\right)
	\end{align*}
	is closed; moreover, it is a set of measure zero because a countable union of spheres and points is a $\lambda^n$-null set. In summary, the restriction $f_\text{R}: \R^n \setminus S \to \R^n \setminus f_\text{R}(S)$ describes a bijective and continuously differentiable mapping. Besides, the following is valid for every $x\in S_r(x_0)$
	\begin{align}
		\|f_\text{R}(x) - x_0 \|_2 = \left\| x - x_0 + \beta h(r) (x-x_0) \right\|_2 = r+ \beta h(r)r. \label{Appendix:Radial_Main_1}
	\end{align}
	This indicates that $f_\text{R}$ maps spheres with radius $r$ to spheres with radius $r+ \beta h(r)r$ around the centering point $x_0$; thus, $f_\text{R}(S)$ is also a countable union of null sets, therefore, itself a null set. Finally, it remains to show that the set of critical points of this restriction is also a Lebesgue null set. From the matrix determinant lemma and the higher dimensional differentiation rules, the Jacobian-determinant of $f_\text{R}$ at position $x\in \R^n \setminus S$ is given by
	\begin{align}
		\det\left(J_{f_\text{R}}(x)\right) = \left(1+\beta h(r)\right)^{n-1} \left(1+\beta h(r) + \beta h'(r) r\right). \label{Appendix:Radial_Main_det}
	\end{align}
	Since $f_\text{R}(x_0)=x_0$ holds and $f_\text{R}$ is bijective, both $\|f_\text{R}(x) - x_0 \|_2 > 0$ and $r = \|x - x_0\|_2>0$ hold for all $x \in \R^n \setminus S$. Thus equation \eqref{Appendix:Radial_Main_1} yields
	\begin{align*}
		\frac{\|f_\text{R}(x) - x_0 \|_2}{r} = 1 + \beta h(r) > 0.
	\end{align*}
	Knowing that this term does not vanish for any $x\in \R^n \setminus S$, the set of critical points can be represented using equation \eqref{Appendix:Radial_Main_det} like
	\begin{align*}
		C_{f_\text{R}} := \left\{x \in \R^n \setminus S \left| \det J_{f_\text{R}}(x) = 0 \right.\right\} = \left\{x \in \R^n \setminus S \left| 1 + \beta h(\tau(x))+ \beta h'(\tau(x)) = 0 \right.\right\}.
	\end{align*}
	This gives $\tau(C_{f_\text{R}}) = C$, which we have assumed to be a countable set. Hence
	\begin{align*}
		C_{f_\text{R}} = \tau^{\minus1}(C) = \bigcup_{c \in C} \tau^{\minus1}(\{c\}) = \bigcup_{c \in C} S_c(x_0),
	\end{align*}
	which is as a countable union of spheres a $\lambda^n$-null set.
\end{proof}

\end{document}